\theoremstyle{plain}
\newtheorem{theorem}{Theorem}[section]
\newtheorem{proposition}[theorem]{Proposition}
\newtheorem{lemma}[theorem]{Lemma}
\newtheorem{corollary}[theorem]{Corollary}
\theoremstyle{definition}
\newtheorem{definition}[theorem]{Definition}
\theoremstyle{remark}
\newtheorem{remark}[theorem]{Remark}
\crefname{hypothesis}{Assumption}{Assumptions}
\newcounter{hypothesis}
\title{Backward Conformal Prediction}
\author{%
  Etienne Gauthier\thanks{etienne.gauthier@inria.fr}\\
  INRIA-ENS-PSL Paris\\
  \And
  Francis Bach \\
  INRIA-ENS-PSL Paris\\
  \And
  Michael I. Jordan \\
  INRIA-ENS-PSL Paris \\
  UC Berkeley
}
\begin{document}

\maketitle

\begin{abstract}
We introduce \emph{Backward Conformal Prediction}, a method that guarantees conformal coverage while providing flexible control over the size of prediction sets. Unlike standard conformal prediction, which fixes the coverage level and allows the conformal set size to vary, our approach defines a rule that constrains how prediction set sizes behave based on the observed data, and adapts the coverage level accordingly. Our method builds on two key foundations: (i) recent results by \citet{gauthier2025evaluesexpandscopeconformal} on post-hoc validity using e-values, which ensure marginal coverage of the form $\mathbb{P}(Y_{\rm test} \in \hat{C}_n^{\tilde{\alpha}}(X_{\rm test})) \ge 1 - \mathbb{E}[\tilde{\alpha}]$ up to a first-order Taylor approximation for any data-dependent miscoverage $\tilde{\alpha}$, and (ii) a novel leave-one-out estimator~$\hat{\alpha}^{\rm LOO}$ of the marginal miscoverage $\mathbb{E}[\tilde{\alpha}]$ based on the calibration set, ensuring that the theoretical guarantees remain computable in practice. This approach is particularly useful in applications where large prediction sets are impractical such as medical diagnosis. We provide theoretical results and empirical evidence supporting the validity of our method, demonstrating that it maintains computable coverage guarantees while ensuring interpretable, well-controlled prediction set sizes.
\end{abstract}

\section{Introduction}

Conformal prediction \citep{vovk2005algorithmiclearning} is a widely used framework for uncertainty quantification in machine learning. It produces set-valued predictions that are guaranteed to contain the true label with high probability, regardless of the underlying data distribution.

Given a calibration set of $n$ labeled examples $\{(X_i, Y_i)\}_{i=1}^n$ and a test point $(X_{\rm test}, Y_{\rm test})$, all assumed to be drawn from the same unknown distribution over $\mathcal{X} \times \mathcal{Y}$, conformal prediction constructs a prediction set $\hat{C}_n^\alpha(X_{\rm test})$ such that
\begin{equation}
\label{cp}
\mathbb{P}(Y_{\rm test} \in \hat{C}_n^\alpha(X_{\rm test})) \ge 1 - \alpha,
\end{equation}
where the target miscoverage $\alpha \in (0,1)$ is fixed by the practitioner. The probability is taken over both the calibration data and the test point, and the guarantee holds under the assumption that all~$n + 1$ points are exchangeable,\footnote{Exchangeable random variables are a sequence of random variables whose joint distribution is invariant under any permutation of their indices.} a generalization of the independent and identically distributed (i.i.d.) setting.

The method uses a score function $S: \mathcal{X} \times \mathcal{Y} \to \mathbb{R}_+$,\footnote{In the conformal prediction literature, scores can also be negative. In this work, we explicitly assume nonnegativity to simplify the construction of e-variables like (\ref{def_e_var}).} typically derived from a pre-trained model~$f$, to evaluate how well the model’s prediction $f(x)$ at input $x$ aligns with a candidate label $y$. In this paper, we assume that the scores are negatively oriented, meaning that a lower score indicates a better fit. The basic idea of conformal prediction is that, under exchangeability, the test score should behave like a typical calibration score: it should not stand out as unusually large. For a new input~$X_{\rm test}$, the conformal set includes all labels $y$ such that the test score $S(X_{\rm test}, y)$ is not excessively large compared to the scores from the calibration set. Standard methods rely on comparing score ranks, which can be interpreted in terms of p-values. We refer the reader to \citet{angelopoulos2023gentle, angelopoulos2024theoreticalfoundationsconformalprediction} for a recent overview of conformal prediction. Our work focuses on classification with a finite set of labels $\mathcal{Y}$, a setting that has been the focus of extensive research in conformal prediction \citep{lei2014classification,Sadinle2019least,hechtlinger2019cautiousdeeplearning,romano2020classificartion,angelopoulos2021uncertainty,cauchois2021knowing,podkopaev2021distribution,guan2022prediction,kumar2023conformal}.

A key limitation of conformal prediction is that the size of the conformal set is entirely data-driven and \emph{cannot be controlled in advance}. In many applications, this lack of control can be problematic: overly large prediction sets may be \emph{too ambiguous to be useful}, especially in high-stakes or resource-constrained settings. This has motivated a growing body of work aiming to reduce the size of conformal sets while maintaining valid coverage guarantees \citep{stutz2022learning, ghosh2023improvingUQ, dhillon2024expected, kiyani2024length, noorani2024conformal, yan2024provably, braun2025minimum}.

In this work, we take a different perspective. Rather than fixing a desired coverage level $1 - \alpha$, we fix a \emph{size constraint rule} $\mathcal{T}$ that flexibly determines, based on the observed calibration data $\{(X_i, Y_i)\}_{i=1}^n$ and the test feature $X_{\rm test}$, the maximum allowable size of the prediction set. Formally:
\begin{definition}[Size constraint rule]
A \emph{size constraint rule} is a function
\[
\mathcal{T} : \left( \{(X_i, Y_i)\}_{i=1}^n, X_{\rm test} \right) \mapsto \mathcal{T}\left(\{(X_i, Y_i)\}_{i=1}^n, X_{\rm test}\right)\in\{1, \dots,\left\lvert\mathcal{Y}\right\rvert\},
\]
where $\left\lvert\mathcal{Y}\right\rvert$ denotes the size of the label space. Given the observed calibration data $\{(X_i, Y_i)\}_{i=1}^n $and test feature $X_{\rm test}$, the rule $\mathcal{T}$ determines the maximum allowable size of the conformal set for $X_{\rm test}$. An example of a size constraint rule is the constant size constraint rule, which corresponds to fixing a conformal set size independent from the observed data. In this case, $\mathcal{T}$ is a constant function, and by abuse of notation, we will also denote by $\mathcal{T} \in \{1, \dots,\left\lvert\mathcal{Y}\right\rvert\}$ the constant value it returns.
\end{definition}
The miscoverage level $\tilde{\alpha}$ is then chosen adaptively to satisfy this constraint. This gives rise to a new method, \emph{Backward Conformal Prediction}, which ensures that the conformal set $\hat{C}_n^{\tilde{\alpha}}(X_{\rm test})$ has size at most $\mathcal{T}\left(\{(X_i, Y_i)\}_{i=1}^n, X_{\rm test}\right)$, and provides marginal coverage guarantees of the form:
\begin{equation}
\label{bcp}
\mathbb{P}(Y_{\rm test} \in \hat{C}_n^{\tilde{\alpha}}(X_{\rm test})) \ge 1 - \mathbb{E}[\tilde{\alpha}],
\end{equation}
where the probability is taken over both the calibration set and the test point. The guarantee above, valid up to a first-order Taylor approximation, was established by \citet{gauthier2025evaluesexpandscopeconformal} by constructing conformal sets with e-values, a method known as conformal e-prediction \citep{vovk2025conformaleprediction}. In Appendix \ref{app:miscoverage}, we further derive a guarantee in the same spirit as~(\ref{bcp}), but obtained more directly, without relying on a Taylor approximation, and fully estimable from the calibration data.

This paper then introduces a practical component: a novel \emph{leave-one-out estimator}~$\hat{\alpha}^{\rm LOO}$ of the marginal miscoverage $\mathbb{E}[\tilde{\alpha}]$, computed from the calibration set. This allows practitioners to estimate the coverage guarantee in practice and make informed decisions about whether or not to trust the conformal set. The Backward Conformal Prediction procedure is illustrated in Figure \ref{fig:bcp}.

\begin{figure}[ht]
\centering
\includegraphics[width=\linewidth]{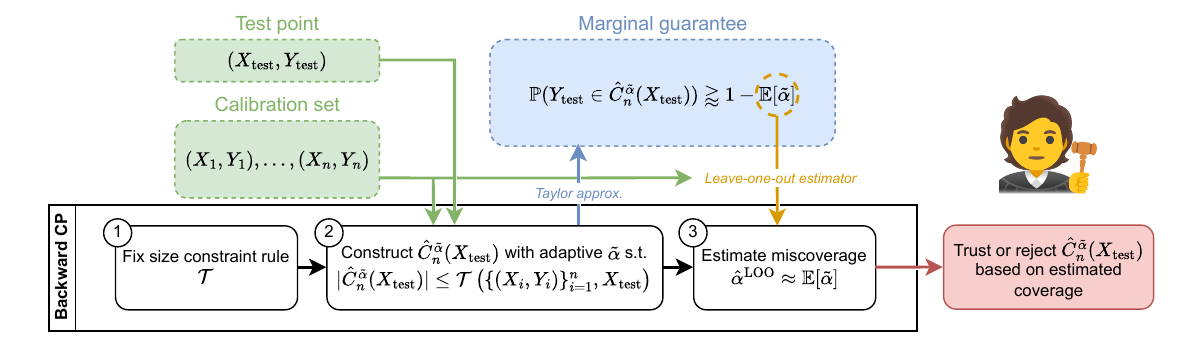} 
\caption{\textbf{Overview of Backward Conformal Prediction.} The procedure first fixes a (potentially data-dependent) size constraint rule $\mathcal{T}$, then constructs a conformal set $\hat{C}_n^{\tilde{\alpha}}(X_{\rm test})$ using an adaptive miscoverage level $\tilde{\alpha}$ chosen to respect the size constraint. A leave-one-out estimator $\hat{\alpha}^{\rm LOO}$ is computed on the calibration set to estimate the marginal miscoverage $\mathbb{E}[\tilde{\alpha}]$, enabling practitioners to decide whether to trust or reject the resulting conformal set based on the estimated coverage.}
\label{fig:bcp}
\end{figure}

In traditional conformal prediction, the miscoverage level $\alpha$ is fixed in advance. Then, a conformal set satisfying marginal coverage guarantees~(\ref{cp}) is constructed based on the calibration set. However, the size of the resulting conformal set is not controlled. In contrast, Backward Conformal Prediction reverses this workflow.  It begins by defining a rule that dictates how the sizes of the conformal sets should behave, depending on the calibration data and the test feature. In this setting, the miscoverage level $\tilde{\alpha}$ becomes an adaptive quantity, dynamically adjusted based on the observed data, and is no longer fixed a priori. This adaptivity allows for controlling the size of the conformal set, though at the cost of indirect control over coverage. Nevertheless, the procedure still yields marginal coverage guarantees (\ref{bcp}), up to a first-order Taylor approximation. This inversion of the standard conformal prediction design, where size control takes precedence over fixed coverage, motivates the name Backward Conformal Prediction.

Our approach is particularly useful in applications where prediction sets must be small and interpretable, such as medical diagnosis or inventory demand forecasting. We briefly present two motivating examples from these domains, where the control over prediction set sizes provides a natural and principled basis for decision-making. 

\paragraph{Healthcare.} In medical diagnosis, doctors must often infer a patient’s condition $Y$ from their profile $X$ (symptoms, history, etc.) under time constraints. Standard conformal prediction, applied to calibration data $\{(X_i, Y_i)\}_{i=1}^n$, may produce prediction sets that are too large to have a practical impact on the diagnosis. Backward Conformal Prediction addresses this by allowing a size constraint which can be fixed or adaptive and data-dependent; for example, expanding the prediction set in rare or unusual cases while keeping it small and actionable for common cases. Our method ensures marginal coverage guarantees (\ref{bcp}) and enables doctors to validate external reliability, via the guarantee that the coverage $1 - \hat{\alpha}^{\text{LOO}} \approx 1 - \mathbb{E}[\tilde{\alpha}]$ remains above a desired level (e.g.,~99\%), striking a balance between diagnostic efficiency and rigor.

\paragraph{Inventory demand forecasting.} In commerce, demand forecasting models predict daily sales $Y$ from features $X$ such as day of the week, weather, seasonality, and promotions. To forecast demand for a new day $X_{\rm test}$, a seller may wish to define a size constraint rule $\mathcal{T}$ that adapts to past demand variability in similar conditions: larger prediction sets for volatile periods (e.g., holidays) and smaller ones for stable days. Backward Conformal Prediction ensures valid coverage, and the seller can verify if estimated coverage~$1-\hat{\alpha}^{\rm LOO}$ meets reliability goals (e.g., 95\%). This balances interpretability and reliability, supporting better stocking decisions under uncertainty.

\section{Method}

Backward Conformal Prediction builds on two core theoretical components. The first is the post hoc validity of the inverses of e-values, which provides marginal coverage guarantees while ensuring that the conformal sets have controlled size. The second is the estimation of these guarantees. While the first component has been established in conformal prediction by \citet{gauthier2025evaluesexpandscopeconformal}, this paper establishes the second guarantee, which is essential for the method to be used in practice and not merely serve as a theoretical benchmark. Before explicating these two key steps, we first review the foundational principles of conformal prediction with e-values, or conformal e-prediction, which serves as the basis for Backward Conformal Prediction.

\subsection{Conformal e-prediction}

Most conformal prediction methods rely on the comparison of score ranks, which can be interpreted using p-values. However, it is also possible to construct conformal sets using e-values, a method known as conformal e-prediction \citep{vovk2025conformaleprediction}. E-values are the values taken by e-variables:

\begin{definition}[E-variable]
    An \emph{e-variable} E is a nonnegative random variable that satisfies 
    \[
    \mathbb{E}[E] \le 1.
    \]
\end{definition}

Backward Conformal Prediction works with any e-variable. For concreteness, we adopt the following e-variable, which was first introduced by \cite{wang2022fdr} and \cite{koning2025measuring}, and later employed in the context of conformal prediction by \citet{balinsky2024EnhancingCP}:

\begin{equation}
\label{def_e_var}
E^{\rm test} = \frac{S(X_{\rm test},Y_{\rm test})}{\frac{1}{n+1}\left(\sum_{i=1}^{n}S(X_i,Y_i)+S(X_{\rm test},Y_{\rm test})\right)}.
\end{equation}

It is straightforward to verify that $\mathbb{E}[E^{\rm test}] = 1$ under exchangeability. The intuition, much like in standard conformal prediction methods, is that the test score $S(X_{\rm test},Y_{\rm test})$ should not be excessively large compared to the calibration scores. However, rather than comparing ranks, as is typically done in conformal prediction, we directly compare the test score to the average of all the scores.

E-variables, when coupled with probabilistic inequalities, allow for the construction of conformal sets with valid $1-\alpha$ coverage. For instance, by applying Markov's inequality, we get:

\[
\mathbb{P}(E^{\rm test} < 1/\alpha) = 1-\mathbb{P}(E^{\rm test} \ge 1/\alpha) \ge 1- \alpha\mathbb{E}[E^{\rm test}] \ge 1- \alpha.
\]

By carefully designing an e-variable based on the calibration set and the test point, we can construct useful conformal sets. 

\subsection{Post-hoc validity}

E-values offer advantages that p-values alone cannot, including post-hoc validity, which enables more flexible and adaptive inference. The connections between e-variables and post-hoc validity have been explored, either explicitly or implicitly, in the following works: \cite{wang2022fdr, xu2024postselectioninference,grunwald2024beyond,ramdas2024hypothesistestingevalues,koning2024posthocalphahypothesistesting}, and leveraged in conformal prediction by \citet{gauthier2025evaluesexpandscopeconformal}. For completeness, we briefly review the main application of post-hoc validity with e-variables in the context of conformal prediction.

\begin{proposition}[\citet{gauthier2025evaluesexpandscopeconformal}]
    Consider a calibration set $\{(X_i,Y_i)\}_{i=1}^n$ and a test data point $(X_{\rm test},Y_{\rm test})$ such that $(X_1,Y_1),\dotsc,(X_n,Y_n),(X_{\rm test},Y_{\rm test})$ are exchangeable. Let $\tilde{\alpha} > 0$ be any miscoverage level that may depend on this data. Then we have that:
\begin{equation}
\label{eq:posthoc}
    \mathbb{E}\left[ \frac{\mathbb{P}(Y_{\rm test} \not\in \hat{C}_n^{\tilde{\alpha}}(X_{\rm test}) \mid \tilde{\alpha})}{\tilde{\alpha}} \right] \le 1,    
\end{equation}
    where 
\[
    \hat{C}_n^{\tilde{\alpha}}(x) := \left\{y :     \frac{S(x,y)}{\frac{1}{n+1}\left(\sum_{i=1}^{n}S(X_i,Y_i) + S(x,y) \right)} < 1/\tilde{\alpha} \right\}.
\]
\end{proposition}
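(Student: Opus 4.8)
The plan is to reduce the statement to two facts: (i) the membership event for the test point is governed exactly by the e-variable $E^{\rm test}$ of \eqref{def_e_var}, and (ii) a pointwise inequality that is the hallmark of post-hoc validity for e-values. First I would note that, by the definition of $\hat{C}_n^{\tilde\alpha}$, plugging $(x,y)=(X_{\rm test},Y_{\rm test})$ makes the denominator appearing in the set coincide with the denominator of $E^{\rm test}$, so that
\[
Y_{\rm test}\in\hat{C}_n^{\tilde\alpha}(X_{\rm test}) \iff E^{\rm test}<1/\tilde\alpha,
\]
and hence $\mathbb{P}\big(Y_{\rm test}\notin\hat{C}_n^{\tilde\alpha}(X_{\rm test})\mid\tilde\alpha\big)=\mathbb{P}\big(E^{\rm test}\ge 1/\tilde\alpha\mid\tilde\alpha\big)$.

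Next I would rewrite the quantity in \eqref{eq:posthoc} as a single unconditional expectation. Since $\tilde\alpha$ is measurable with respect to the conditioning $\sigma$-algebra, $\mathbb{P}(E^{\rm test}\ge 1/\tilde\alpha\mid\tilde\alpha)/\tilde\alpha=\mathbb{E}\big[\mathbb{1}\{E^{\rm test}\ge 1/\tilde\alpha\}/\tilde\alpha\mid\tilde\alpha\big]$, so by the tower property the left-hand side of \eqref{eq:posthoc} equals $\mathbb{E}\big[\mathbb{1}\{E^{\rm test}\ge 1/\tilde\alpha\}/\tilde\alpha\big]$.

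The crux of the argument — and the step I would want to state most carefully, since it is precisely where e-values buy something p-values do not — is the deterministic bound
\[
\frac{\mathbb{1}\{E^{\rm test}\ge 1/\tilde\alpha\}}{\tilde\alpha}\;\le\; E^{\rm test}\,\mathbb{1}\{E^{\rm test}\ge 1/\tilde\alpha\}\;\le\; E^{\rm test},
\]
valid pointwise for \emph{any} strictly positive, arbitrarily data-dependent $\tilde\alpha$ (on the event the indicator is $1$ we have $1/\tilde\alpha\le E^{\rm test}$; off it the left side is $0\le E^{\rm test}$ by nonnegativity of scores). Taking expectations gives $\mathbb{E}\big[\mathbb{1}\{E^{\rm test}\ge 1/\tilde\alpha\}/\tilde\alpha\big]\le\mathbb{E}[E^{\rm test}]$. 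Note that no independence or measurability relationship between $\tilde\alpha$ and $E^{\rm test}$ is needed here, which is exactly why $\tilde\alpha$ may be chosen after seeing the data.

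Finally I would verify $\mathbb{E}[E^{\rm test}]\le 1$: writing $\bar S:=\tfrac{1}{n+1}\big(\sum_{i=1}^n S(X_i,Y_i)+S(X_{\rm test},Y_{\rm test})\big)$, the $n+1$ ratios $S(X_i,Y_i)/\bar S$ and $S(X_{\rm test},Y_{\rm test})/\bar S$ sum identically to $n+1$, and under exchangeability of the $n+1$ points they all have the same expectation, so each equals $1$; thus $\mathbb{E}[E^{\rm test}]=1$. This is the only place exchangeability is used. Chaining the three displays yields \eqref{eq:posthoc}. I do not anticipate a genuine obstacle; the only subtlety is conceptual, namely recognizing that the pointwise inequality, rather than a conditional Markov bound, is what licenses a fully data-dependent $\tilde\alpha$.
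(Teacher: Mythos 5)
Your proof is correct: the equivalence $Y_{\rm test}\in\hat{C}_n^{\tilde\alpha}(X_{\rm test})\iff E^{\rm test}<1/\tilde\alpha$, the tower-property rewriting, the pointwise bound $\mathbb{1}\{E^{\rm test}\ge 1/\tilde\alpha\}/\tilde\alpha\le E^{\rm test}$, and the exchangeability argument giving $\mathbb{E}[E^{\rm test}]=1$ together establish \eqref{eq:posthoc}. This is precisely the standard post-hoc e-value argument that the paper relies on by citing \citet{gauthier2025evaluesexpandscopeconformal} rather than reproving it, so your route matches the intended one.
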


As pointed out by \citet{gauthier2025evaluesexpandscopeconformal}, when $\tilde{\alpha}$ is constant, we recover the standard conformal guarantee~(\ref{cp}). When $\tilde{\alpha}$ is well-concentrated around its mean, we can use a first-order Taylor approximation to obtain:
\[
\mathbb{E}\left[ \frac{\mathbb{P}(Y_{\rm test} \not\in \hat{C}_n^{\tilde{\alpha}}(X_{\rm test}) \mid \tilde{\alpha})}{\tilde{\alpha}} \right] \approx \frac{ \mathbb{E}[ \mathbb{P}(Y_{\rm test} \not\in \hat{C}_n^{\tilde{\alpha}}(X_{\rm test}) \mid \tilde{\alpha}) ]}{\mathbb{E}[\tilde{\alpha}]} = \frac{ \mathbb{P}(Y_{\rm test} \not\in \hat{C}_n^{\tilde{\alpha}}(X_{\rm test}))}{\mathbb{E}[\tilde{\alpha}]}.
\]
Combined with (\ref{eq:posthoc}), this implies that (\ref{bcp}) holds up to a first-order Taylor approximation. Exact coverage guarantees, which are in the same spirit as (\ref{bcp}), are presented in Appendix \ref{app:miscoverage}.

The flexibility of e-variables allows for the construction of conformal sets that adapt to the structure of the data, including explicit control over their size. In a classification task where we wish to control the conformal set sizes, standard conformal prediction techniques do not provide a direct way to enforce this. In contrast, using e-variables allows us to define a data-dependent miscoverage level $\tilde{\alpha}$, given a size constraint rule $\mathcal{T}$, as follows:
\begin{equation}
\label{eq:def-alpha-tilde}
\tilde{\alpha} := \inf \left\{ \alpha \in (0,1) : \# \left\{y : \mathbf{E}^{\rm test}_y < 1/\alpha \right\} \le \mathcal{T}(\{(X_i, Y_i)\}_{i=1}^n, X_{\rm test}) \right\},
\end{equation}
where, for the e-variable defined in Equation (\ref{def_e_var}),
\[
\mathbf{E}^{\rm test} := \left(\frac{S(X_{\rm test},y)}{\frac{1}{n+1}\left(\sum_{i=1}^{n}S(X_i,Y_i)+S(X_{\rm test},y)\right)} \right)_{y \in \mathcal{Y}}
\]
is the ratio vector for the test score, where each element corresponds to the ratio of the test score~$S(X_{\rm test}, y)$ to the average of the calibration scores and the test score. This formulation allows us to obtain coverage guarantees of the form~\eqref{bcp} while explicitly controlling the size of the conformal set.

With additional assumptions on $\tilde{\alpha}$, such as sub-Gaussianity, it is possible to obtain coverage guarantees dependent on the observed miscoverage $\tilde{\alpha}$ and not on the marginal miscoverage $\mathbb{E}[\tilde{\alpha}]$ (see \citet{boucheron2013concentration} for a standard reference on concentration inequalities for sub-Gaussian variables). In this paper, we propose an approach that does not require any additional assumptions and relies solely on the estimation of the marginal miscoverage term $\mathbb{E}[\tilde{\alpha}]$. We suggest estimating it using the calibration data, which are fully accessible to the practitioner.

\subsection{Leave-one-out estimator}

In this section, we introduce the leave-one-out estimator $\hat{\alpha}^{\rm LOO}$ for the marginal miscoverage term~$\mathbb{E}[\tilde{\alpha}]$. Because it depends only on calibration scores and not on the test label, this estimator provides a fully computable, real-world proxy for miscoverage, enabling practitioners to obtain empirical guarantees on coverage. The key intuition is that, for large $n$, the denominator of the e-value~$E^{\rm test}$ in (\ref{def_e_var}) closely approximates the expected score $\mathbb{E}[S(X, Y)]$, so~$E^{\rm test}$ effectively measures how far the test score deviates from the true average of the scores. To approximate the expectation of the marginal miscoverage, we emulate pseudo-ratios $\mathbf{E}^j$ based on the calibration set, where we compare $S(X_j, y)$ to the average of the calibration scores for all $y \in \mathcal{Y}$:
\[
\mathbf{E}^j := \left(\frac{S(X_{j},y)}{\frac{1}{n}\left(\sum_{i=1, i\neq j}^{n}S(X_i,Y_i)+S(X_{j},y)\right)} \right)_{y \in \mathcal{Y}} \quad \text{for all $j=1,\dots,n$.}
\]
We compute the corresponding pseudo-miscoverages:
\begin{equation*}
\tilde{\alpha}_j := \inf \left\{ \alpha \in (0,1) : \# \left\{y : \mathbf{E}^j_y < 1/\alpha \right\} \le \mathcal{T}(\{(X_i, Y_i)\}_{i=1, i \neq j}^n, X_j) \right\},
\end{equation*}
and we define:
\begin{equation}
\label{alpha_LOO}
\hat{\alpha}^{\rm LOO} := \frac{1}{n} \sum_{j=1}^n \tilde{\alpha}_j.
\end{equation}
This corresponds to averaging the $n$ pseudo-miscoverage terms obtained by artificially designating each calibration score $S(X_j, Y_j)$ as a pseudo-test score, with the remaining scores $\{S(X_i, Y_i)\}_{i=1, i \ne j}^n$ playing the role of the pseudo-calibration set. The construction of $\hat{\alpha}^{\rm LOO}$ is schematically detailed in Figure \ref{fig:looestimator}.

\begin{figure}[ht]
\centering
\includegraphics[width=.89\linewidth]{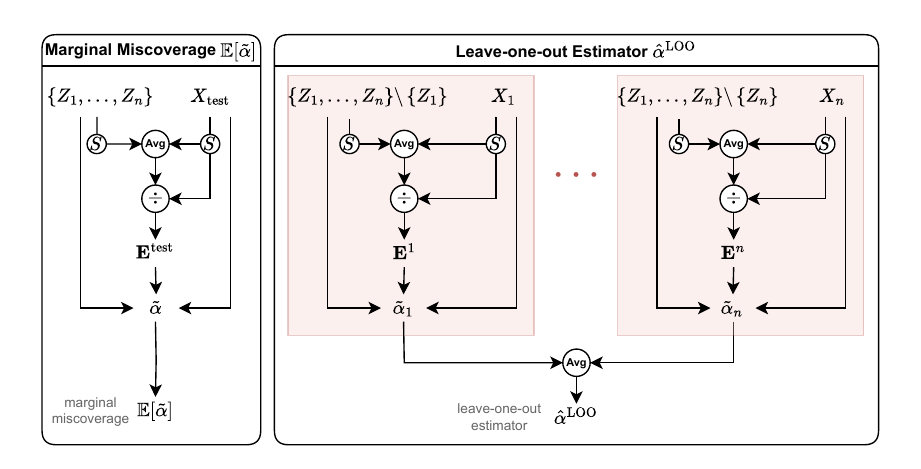} 
\caption{The \textbf{left panel} illustrates the definition of the true marginal miscoverage $\mathbb{E}[\tilde{\alpha}]$, which depends on the ratio between the test score $S(X_{\rm test},.)$ and the average of all $n{+}1$ scores. The \textbf{right panel} depicts the leave-one-out estimator $\hat{\alpha}^{\rm LOO}$: each calibration point $j$ yields a pseudo-miscoverage~$\tilde{\alpha}_j$ by comparing $S(X_j,.)$ to the average calibration score. Averaging these gives $\hat{\alpha}^{\rm LOO}$, which approximates $\mathbb{E}[\tilde{\alpha}]$ without using the test score. Feature-label pairs are denoted $Z_i := (X_i, Y_i)$.}
\label{fig:looestimator}
\end{figure}

We summarize the Backward Conformal Prediction procedure in Algorithm \ref{algorithm}.

\begin{algorithm}[H]
\label{algorithm}
\caption{Backward Conformal Prediction}
\KwIn{Calibration set $\{(X_i,Y_i)\}_{i=1}^n$, test feature $X_{\rm test}$, size constraint rule $\mathcal{T}$, score function $S$}
\KwOut{Conformal set $\hat{C}_n^{\tilde\alpha}(X_{\rm test})$ of size $\le \mathcal{T}(\{(X_i, Y_i)\}_{i=1}^n, X_{\rm test})$ and an approximate marginal coverage}

\For{$i=1$ \KwTo $n$}{
    Compute calibration score $S(X_i, Y_i)$\;
}
Select $\tilde{\alpha}$ adaptively using Eq.~(\ref{eq:def-alpha-tilde}) to build conformal set $\hat{C}_n^{\tilde\alpha}$ of size $\le \mathcal{T}(\{(X_i, Y_i)\}_{i=1}^n, X_{\rm test})$, satisfying guarantee (\ref{bcp})\;
Compute approximate miscoverage $\hat{\alpha}^{\rm LOO}$ using Eq.~(\ref{alpha_LOO})\;
\Return{$\hat{C}_n^{\tilde\alpha}(X_{\rm test})$, approximate marginal coverage $1-\hat{\alpha}^{\rm LOO}$}
\end{algorithm}

\section{Theoretical analysis}
\label{sec:theory}

In this section, we present theoretical properties satisfied by the estimator $\hat{\alpha}^{\rm LOO}$ of the marginal miscoverage $\mathbb{E}[\tilde{\alpha}]$.

Let $\mu := \mathbb{E}[S(X,Y)]$ denote the expected value of $S(X,Y)$. For each calibration point~$j~\in~\{1,\dots,n\}$, we define the normalized score vector:
    \[
    \mathbf{\tilde{E}}^j := \bigg(\frac{S(X_{j},y)}{\mu}\bigg)_{y \in \mathcal{Y}},
    \]
    which consists of the normalized score values of $S(X_{j},y)$ for each $y \in \mathcal{Y}$. Similarly, for the test point, we define the vector:
        \[
    \mathbf{\tilde{E}^{\rm test}} := \bigg(\frac{S(X_{\rm test},y)}{\mu}\bigg)_{y \in \mathcal{Y}},
    \]
    which represents the normalized score values $S(X_{\rm test},y)$ for each $y \in \mathcal{Y}$.

We show that, under certain assumptions on the size constraint rule $\mathcal{T}$, the estimator $\hat{\alpha}^{\rm LOO}$ concentrates around its target $\mathbb{E}[\tilde{\alpha}]$, with an estimation error of order $O_P\left(\frac{1}{\sqrt{n}}\right)$ as the calibration size $n$ increases, using the $O_P$ notation from \citet{vaart1998asymptotics}.

\subsection{Constant size constraint rule}
\label{sub:constant}

To build intuition, we begin with the simple case where the size constraint rule $\mathcal{T}$ is a constant, which we also denote by $\mathcal{T}$ for convenience. This simplified case provides a foundation for understanding the estimator's properties under more general conditions.

\begin{theorem}
\label{thm:consistent}
Assume that the score function $S$ is bounded and takes values in the interval $[S_{\min},S_{\max}]$, with $0<S_{\min}\le S_{\max}$. Suppose that the number of calibration samples satisfies $n > S_{\max}/S_{\min}$. In addition, assume that the vectors $\mathbf{E^{\rm test}}$, $\mathbf{E}^j$, $\mathbf{\tilde{E}^{\rm test}}$, and $\mathbf{\tilde{E}}^j$ satisfy the following properties almost surely:
\begin{enumerate}[label=(P\arabic*), ref=P\arabic*]
    \item \label{prop:P1} $1 \le \# \left\{y \in \mathcal{Y} \mid \mathbf{E}_y < 1\right\} \le \mathcal{T} < \left\lvert \mathcal{Y} \right\rvert$;
    \item \label{prop:P2} For all $y \in \mathcal{Y}$, $\mathbf{E}_y \neq 1$.
\end{enumerate}
Then, if the samples $(X_i,Y_i)$ are i.i.d., the leave-one-out estimator satisfies:
\[
\left\lvert \hat{\alpha}^{\rm LOO} - \mathbb{E}[\tilde{\alpha}] \right\rvert = O_P\left(\frac{1}{\sqrt{n}}\right).
\]
\end{theorem}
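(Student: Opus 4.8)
The plan is to rewrite the data-dependent miscoverage levels as explicit functions of order statistics, introduce an ``oracle'' surrogate in which the empirical normalization is replaced by the true mean $\mu := \mathbb{E}[S(X,Y)]$, and then bound $\hat{\alpha}^{\rm LOO} - \mathbb{E}[\tilde\alpha]$ via a three-term decomposition handled by deterministic bounds (from boundedness of $S$) plus the CLT for empirical means. \textbf{Step 1 (reduction to order statistics).} For a constant rule $\mathcal{T}$, the map $\alpha \mapsto \#\{y : \mathbf{E}_y < 1/\alpha\}$ is a non-increasing step function of $\alpha$, so the infimum in \eqref{eq:def-alpha-tilde} is attained and $\tilde\alpha = 1/\mathbf{E}^{\rm test}_{(\mathcal{T}+1)}$, where $(\,\cdot\,)_{(k)}$ denotes the $k$-th smallest coordinate of a vector; assumptions \eqref{prop:P1}--\eqref{prop:P2} force $\mathbf{E}^{\rm test}_{(\mathcal{T}+1)} > 1$, so $\tilde\alpha \in (0,1)$. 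Likewise $\tilde\alpha_j = 1/\mathbf{E}^j_{(\mathcal{T}+1)}$. I then introduce the oracle quantities $\tilde\alpha^{\rm orc} := 1/\tilde{\mathbf{E}}^{\rm test}_{(\mathcal{T}+1)} = \mu/S(X_{\rm test},\cdot)_{(\mathcal{T}+1)}$ and $\tilde\alpha^{\rm orc}_j := 1/\tilde{\mathbf{E}}^j_{(\mathcal{T}+1)} = \mu/S(X_j,\cdot)_{(\mathcal{T}+1)}$, which by \eqref{prop:P1}--\eqref{prop:P2} applied to $\tilde{\mathbf{E}}^{\rm test}$ and $\tilde{\mathbf{E}}^j$ also have $\tilde{\mathbf{E}}^{\rm test}_{(\mathcal{T}+1)}, \tilde{\mathbf{E}}^j_{(\mathcal{T}+1)} > 1$.

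\textbf{Step 2 (oracle error).} With $\bar{S}_n := \frac1n\sum_{i=1}^n S(X_i,Y_i)$, boundedness of $S$ gives, uniformly in $y$, that the denominator of $\mathbf{E}^{\rm test}_y$ is $\ge S_{\min}$ and within $|\bar{S}_n - \mu| + 2S_{\max}/(n+1)$ of $\mu$; since also $\mu \ge S_{\min}$ and $S(X_{\rm test},y) \le S_{\max}$, this yields $\sup_y |\mathbf{E}^{\rm test}_y - \tilde{\mathbf{E}}^{\rm test}_y| \le \delta_n^{\rm test} := \tfrac{S_{\max}}{S_{\min}^2}\big(|\bar{S}_n - \mu| + \tfrac{2S_{\max}}{n+1}\big)$. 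The same argument for calibration point $j$, using that the denominator of $\mathbf{E}^j_y$ equals $\bar{S}_n - \tfrac1n S(X_j,Y_j) + \tfrac1n S(X_j,y)$, gives $\sup_{j,y}|\mathbf{E}^j_y - \tilde{\mathbf{E}}^j_y| \le \delta_n := \tfrac{S_{\max}}{S_{\min}^2}\big(|\bar{S}_n - \mu| + \tfrac{S_{\max}}{n}\big)$, a bound uniform over $j$. Because the $k$-th order statistic is $1$-Lipschitz in the $\ell^\infty$ norm, $|\mathbf{E}^{\rm test}_{(\mathcal{T}+1)} - \tilde{\mathbf{E}}^{\rm test}_{(\mathcal{T}+1)}| \le \delta_n^{\rm test}$ and $\sup_j|\mathbf{E}^j_{(\mathcal{T}+1)} - \tilde{\mathbf{E}}^j_{(\mathcal{T}+1)}| \le \delta_n$; dividing by the product of the two order statistics (each $> 1$) gives $|\tilde\alpha - \tilde\alpha^{\rm orc}| \le \delta_n^{\rm test}$ and $\sup_j|\tilde\alpha_j - \tilde\alpha^{\rm orc}_j| \le \delta_n$.

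\textbf{Step 3 (decomposition and concentration).} Write
\[
\hat{\alpha}^{\rm LOO} - \mathbb{E}[\tilde\alpha] = \underbrace{\tfrac1n\sum_{j=1}^n(\tilde\alpha_j - \tilde\alpha^{\rm orc}_j)}_{(\mathrm{I})} \;+\; \underbrace{\tfrac1n\sum_{j=1}^n \tilde\alpha^{\rm orc}_j - \mathbb{E}[\tilde\alpha^{\rm orc}]}_{(\mathrm{II})} \;+\; \underbrace{\mathbb{E}[\tilde\alpha^{\rm orc}] - \mathbb{E}[\tilde\alpha]}_{(\mathrm{III})}.
\]
For $(\mathrm{I})$: $|(\mathrm{I})| \le \delta_n$, and since the i.i.d.\ bounded scores have finite variance, $\bar{S}_n - \mu = O_P(1/\sqrt{n})$, hence $\delta_n = O_P(1/\sqrt{n})$. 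For $(\mathrm{III})$: $|(\mathrm{III})| \le \mathbb{E}|\tilde\alpha - \tilde\alpha^{\rm orc}| \le \mathbb{E}[\delta_n^{\rm test}] \le \tfrac{S_{\max}}{S_{\min}^2}\big(\sqrt{\operatorname{Var}(S(X,Y))/n} + \tfrac{2S_{\max}}{n+1}\big) = O(1/\sqrt{n})$ by Cauchy--Schwarz. For $(\mathrm{II})$: $\tilde\alpha^{\rm orc}_j = \mu/S(X_j,\cdot)_{(\mathcal{T}+1)}$ is a fixed measurable function of $X_j$ alone, so the $\tilde\alpha^{\rm orc}_j$ are i.i.d.; since $X_{\rm test}$ has the same law, $\mathbb{E}[\tilde\alpha^{\rm orc}_j] = \mathbb{E}[\tilde\alpha^{\rm orc}]$, and as they lie in $[S_{\min}/S_{\max}, S_{\max}/S_{\min}]$, the CLT gives $(\mathrm{II}) = O_P(1/\sqrt{n})$. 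Summing the three bounds yields $\hat{\alpha}^{\rm LOO} - \mathbb{E}[\tilde\alpha] = O_P(1/\sqrt{n})$.

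\textbf{Main obstacle.} The delicate step is Step 2, namely controlling the effect of replacing the data-dependent normalization $\frac{1}{n+1}(\sum_i S(X_i,Y_i) + S(\cdot))$ by $\mu$ inside the infimum defining $\tilde\alpha$. This is not a global rescaling of $\mathbf{E}^{\rm test}$ — the normalizer depends on $y$ through its own $S(X_{\rm test},y)/(n+1)$ term — so it cannot simply be factored out; the remedy is to bound the perturbation uniformly in $y$ in $\ell^\infty$ using boundedness of $S$, then invoke $1$-Lipschitzness of order statistics, with \eqref{prop:P1}--\eqref{prop:P2} ensuring the relevant order statistics stay bounded away from $1$ so that passing to reciprocals remains stable. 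Everything else is routine: deterministic bounds, the CLT for $\bar{S}_n$, and the i.i.d.\ structure that turns the leave-one-out average into a genuine empirical mean up to the $O_P(1/\sqrt{n})$ correction from $(\mathrm{I})$. (The hypothesis $n > S_{\max}/S_{\min}$ enters only to keep the leave-one-out construction non-degenerate, e.g.\ guaranteeing $n \ge 2$ and positivity of the pseudo-denominators.)
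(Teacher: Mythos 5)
Your proof is correct, and it takes a genuinely simpler route to the key technical step. The overall architecture mirrors the paper's: a three-term decomposition into (I) the deviation of each $\tilde\alpha_j$ from an "oracle" surrogate built with $\mu$ in place of the empirical normalizer, (II) i.i.d.\ concentration of the oracle terms, and (III) the analogous oracle deviation on the test side, corresponding exactly to the paper's $T_1,T_2,T_3$ with $\tilde\alpha^{\rm orc}_j = F(\tilde{\mathbf{E}}^j)$. Where you diverge substantially is in establishing the crucial stability of the map from ratio vectors to miscoverage. The paper smooths the non-differentiable cardinality constraint with a bump function, invokes the Implicit Function Theorem to show $F_\lambda$ is $C^1$ with $\lVert\nabla F_\lambda\rVert \le 1$, and passes to the limit $\lambda\to\infty$ to conclude $F$ is $1$-Lipschitz. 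You instead notice that for a constant rule $\mathcal{T}$, with (P1)--(P2) in force, the infimum has the closed form $\tilde\alpha = 1/\mathbf{E}_{(\mathcal{T}+1)}$ (the reciprocal of the $(\mathcal{T}+1)$-th order statistic, which is $>1$ on the relevant domain), and $1$-Lipschitzness is then immediate from the well-known $\ell^\infty$-contraction of order statistics plus contraction of the reciprocal on $(1,\infty)$. This is more elementary and more transparent than the bump/IFT machinery and recovers the same Lipschitz constant. The trade-off: you use the CLT for $\bar S_n$, which suffices for the stated $O_P(1/\sqrt n)$ but does not yield the explicit finite-sample deviation bound with named constants that the paper obtains via Hoeffding and reports in its Remark after the theorem; replacing your CLT step with Hoeffding would recover that as well.

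One minor note: the hypothesis $n > S_{\max}/S_{\min}$ plays a slightly more quantitative role in the paper's explicit bound (keeping $S_{\min}/S_{\max} - 1/n > 0$ in a denominator) than the purely non-degeneracy role you assign it; in your argument it is indeed not needed for the asymptotic rate because your $\delta_n$ bounds use $D \ge S_{\min}$ directly rather than $S_{\min} - S_{\max}/n$.
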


Note that, under the boundedness assumption on $S$, the vectors $\mathbf{E^{\rm test}}$ and $\mathbf{E}^j$ are well defined. This assumption also  implies $\mu \neq 0$, which guarantees that $\mathbf{\tilde{E}^{\rm test}}$ and $\mathbf{\tilde{E}}^j$ are well defined.

In Property (\ref{prop:P1}), the inequality $1 \le \# \left\{y \in \mathcal{Y} \mid \mathbf{E}_y < 1\right\}$ states that at least one label in the conformal prediction framework has a relatively low score. This ensures that the model does not assign high scores to all labels, which would make it difficult to differentiate between them. The assumption is not overly restrictive, as it only requires one label to have a low score, leaving flexibility for other labels to have higher scores. The two other inequalities $\# \left\{y \in \mathcal{Y} \mid \mathbf{E}_y < 1\right\} \le \mathcal{T} < \left\lvert \mathcal{Y} \right\rvert$ serve to rule out edge cases that would otherwise lead to degenerate behavior. Property (\ref{prop:P2}) plays a similar role. This assumption is mild, as score functions are typically continuous and attain any fixed value with probability zero. Likewise, the boundedness assumption on $S$ is generally unrestrictive; it helps ensure the stability of the analysis by excluding pathological situations.

The proof proceeds in several steps. First, we show that the infimum appearing in the definition of the miscoverage is stable under small perturbations of the ratio vectors $\mathbf{E}$. This is done by approximating the set cardinalities using a smooth surrogate based on bump functions, and applying the Implicit Function Theorem. Next, we show that the leave-one-out estimator $\hat{\alpha}^{\rm LOO}$ is close to the miscoverage $\mathbb{E}[\tilde{\alpha}]$, relying on the intuition that when $n$ is large, the average of the calibration scores is approximately equal to their expected value. We formalize this using Hoeffding’s inequality to establish concentration (see \citet{hoeffding1963}). The full proof is deferred to Appendix \ref{app:proofs}.

\begin{remark}
    The explicit bound established in the proof of Theorem~\ref{thm:consistent} shows that for any $\delta > 0$, we have
    \[
    \left\lvert \hat{\alpha}^{\rm LOO} - \mathbb{E}[\tilde{\alpha}] \right\rvert \le S_{\max} \frac{\sqrt{\frac{\log(4/\delta)}{2n}} + \frac{2}{n}}{\mu \left( S_{\min}/S_{\max}- \frac{1}{n} \right)} + \sqrt{\frac{\log(4/\delta)}{2n}} + \frac{2S_{\max}^2}{\mu S_{\min}} \frac{n+1}{n} \sqrt{\frac{\pi}{2(n+1)}},
    \]
with probability at least $1 - \delta$. While the constants $S_{\min}$ and $S_{\max}$ are typically known to the practitioner, $\mu$ is generally not observed. However, we may still upper bound the expression by
    \[
    \left(\frac{S_{\max}}{S_{\min}}\right) \frac{\sqrt{\frac{\log(4/\delta)}{2n}} + \frac{2}{n}}{S_{\min}/S_{\max}- \frac{1}{n} } + \sqrt{\frac{\log(4/\delta)}{2n}} + 2\left(\frac{S_{\max}}{S_{\min}}\right)^2 \frac{n+1}{n} \sqrt{\frac{\pi}{2(n+1)}} =: R_\delta(n),
    \]
which guarantees that, as long as $\mathbb{P}(Y_{\rm test} \in \hat{C}_n^{\tilde{\alpha}}(X_{\rm test})) \ge 1 - \mathbb{E}[\tilde{\alpha}]$, we obtain
\[
\mathbb{P}(Y_{\rm test} \in \hat{C}_n^{\tilde{\alpha}}(X_{\rm test})) \ge 1 - \hat{\alpha}^{\rm LOO} - R_\delta(n),
\]
with probability at least $1 - \delta$. This provides a practical decision-making tool: given a target threshold $\tau$, the practitioner can trust the conformal set with probability at least $1 - \delta$ if the inequality $1 - \hat{\alpha}^{\rm LOO} - R_\delta(n) \ge \tau$ holds, and reject it otherwise.
\end{remark}

In addition, the variance of $\hat{\alpha}^{\rm LOO}$ decreases at rate $O\left(\frac{1}{n}\right)$, further supporting its concentration around the marginal miscoverage $\mathbb{E}[\tilde{\alpha}]$.

\begin{theorem}
\label{thm:variance}
    Under the assumptions of Theorem \ref{thm:consistent}, we have:
    \[
    \mathrm{Var}\left(\hat{\alpha}^{\mathrm{LOO}}\right) = O\Big(\frac{1}{n}\Big).
    \]
\end{theorem}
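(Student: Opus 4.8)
The plan is to regard $\hat{\alpha}^{\rm LOO} = \frac{1}{n}\sum_{j=1}^{n}\tilde{\alpha}_j$ as a function $f(Z_1,\dots,Z_n)$ of the i.i.d.\ inputs $Z_i=(X_i,Y_i)$ and to invoke the Efron--Stein inequality \citep{boucheron2013concentration}: writing $Z^{(k)}$ for the sample in which $Z_k$ is replaced by an independent copy $Z_k'$, one has
\[
\mathrm{Var}(f) \;\le\; \frac{1}{2}\sum_{k=1}^{n}\mathbb{E}\big[(f(Z)-f(Z^{(k)}))^2\big].
\]
Hence it suffices to show that $|f(Z)-f(Z^{(k)})| = O(1/n)$ almost surely, uniformly in $k$, with a constant depending only on $S_{\min},S_{\max}$; summing $n$ terms of order $O(1/n^2)$ then yields the claimed $O(1/n)$ bound.

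To control $f(Z)-f(Z^{(k)}) = \frac{1}{n}\sum_{j=1}^{n}(\tilde{\alpha}_j - \tilde{\alpha}_j^{(k)})$, where $\tilde{\alpha}_j^{(k)}$ is the pseudo-miscoverage computed from $Z^{(k)}$, I would treat the index $j=k$ separately from the indices $j\ne k$. For $j=k$, both $\tilde{\alpha}_k$ and $\tilde{\alpha}_k^{(k)}$ lie in $(0,1)$, so this term contributes at most $\frac1n$. For $j\ne k$, the point $Z_k$ enters $\tilde{\alpha}_j$ only through the leave-one-out average $\frac1n\big(\sum_{i\ne j}S(X_i,Y_i)+S(X_j,y)\big)$ in the denominator of $\mathbf{E}^j_y$ (the rule $\mathcal{T}$ is constant here, hence unaffected by resampling). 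Replacing $Z_k$ by $Z_k'$ shifts that average by at most $(S_{\max}-S_{\min})/n$; since the average lies in $[S_{\min},S_{\max}]$ and the numerator $S(X_j,y)$ is in $[S_{\min},S_{\max}]$, each coordinate $\mathbf{E}^j_y$ moves by at most $S_{\max}^2/(n\,S_{\min}^2)$, i.e.\ the ratio vector moves by $O(1/n)$ in sup-norm. Feeding this into the stability argument used in the proof of Theorem~\ref{thm:consistent} --- which shows, under (P1)--(P2) and boundedness of $S$, that the infimum defining the pseudo-miscoverage is Lipschitz in the ratio vector with a constant controlled by $S_{\min},S_{\max}$ (indeed, under (P1)--(P2), $\tilde{\alpha}_j$ equals the reciprocal of the $(\mathcal{T}{+}1)$-th smallest coordinate of $\mathbf{E}^j$, which is $1$-Lipschitz in $\ell^\infty$ on the relevant domain where that coordinate exceeds $1$) --- gives $|\tilde{\alpha}_j-\tilde{\alpha}_j^{(k)}| = O(1/n)$. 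Summing over the $n-1$ indices $j\ne k$ and dividing by $n$ contributes a further $O(1/n)$, so $|f(Z)-f(Z^{(k)})| = O(1/n)$, and Efron--Stein concludes.

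The step I expect to be the crux is precisely this uniform Lipschitz stability of $\tilde{\alpha}_j$ in its ratio vector: the Lipschitz constant must be bounded in terms of $S_{\min},S_{\max}$ alone, not on the realization, so that an $O(1/n)$ perturbation of the denominators produces an $O(1/n)$ perturbation of the pseudo-miscoverage --- this is where Properties (P1)--(P2) and boundedness of $S$ are essential, ruling out a discontinuous jump of the count $\#\{y:\mathbf{E}^j_y<1/\alpha\}$ near the critical level. Since that stability statement is exactly what is proved (via bump-function smoothing and the Implicit Function Theorem) in establishing Theorem~\ref{thm:consistent}, it can be reused verbatim. As an alternative route that recycles even more of that machinery, one may decompose $\hat{\alpha}^{\rm LOO} = \frac1n\sum_j\bar{\alpha}_j + \frac1n\sum_j(\tilde{\alpha}_j-\bar{\alpha}_j)$, where $\bar{\alpha}_j$ is built with $\mu$ in place of the leave-one-out average; the $\bar{\alpha}_j$ are i.i.d.\ and bounded, so the first average has variance $O(1/n)$, while the remainder has $L^2$ norm $O(1/\sqrt{n})$ by combining the same stability estimate with the $O(1/n)$ variance of an average of bounded i.i.d.\ terms, and $\mathrm{Var}(A+B)\le 2\,\mathrm{Var}(A)+2\,\mathrm{Var}(B)$ then finishes.
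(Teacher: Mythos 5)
Your primary argument is correct and takes a genuinely different route than the paper. The paper's proof decomposes $\hat{\alpha}^{\rm LOO} = \hat{\alpha}^{\rm i.i.d.} + \Delta$ (where $\hat{\alpha}^{\rm i.i.d.} = \frac1n\sum_j F(\mathbf{\tilde E}^j)$ uses $\mu$ in the denominators), bounds $\mathrm{Var}(\hat{\alpha}^{\rm i.i.d.})$ trivially, bounds $\mathrm{Var}(\Delta)$ by integrating the Hoeffding tail bound for $T_1$ already established in the proof of Theorem~\ref{thm:consistent}, and finishes with Cauchy--Schwarz on the covariance; this is exactly your ``alternative route.'' Your main route, by contrast, applies Efron--Stein to the function $(Z_1,\dots,Z_n)\mapsto\hat{\alpha}^{\rm LOO}$ directly, checking a bounded-differences property: replacing one $Z_k$ by an independent copy shifts each leave-one-out denominator in $\mathbf{E}^j$ ($j\ne k$) by at most $(S_{\max}-S_{\min})/n$, hence each coordinate of $\mathbf{E}^j$ by $O(1/n)$ with a constant depending only on $S_{\min},S_{\max}$, and the $1$-Lipschitz stability of $F$ (Theorem~\ref{F-lipschitz}, plus Properties~(\ref{prop:P1})--(\ref{prop:P2}) holding a.s.\ for both the original and the resampled configuration) converts this into $|\tilde{\alpha}_j-\tilde{\alpha}_j^{(k)}|=O(1/n)$; the $j=k$ term is trivially bounded by $1$; summing gives $|f(Z)-f(Z^{(k)})|=O(1/n)$ uniformly in $k$, and Efron--Stein closes at $O(1/n)$. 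This is cleaner and more self-contained than the paper's argument: it avoids the tail-bound integration entirely and yields a constant that is free of the unobserved quantity $\mu$, at the price of not producing the same explicit form of remainder as in the paper's Remark following Theorem~\ref{thm:consistent}. Both proofs hinge on the same stability lemma, and your parenthetical observation that on $\mathbb{R}_+^{|\mathcal{Y}|,[1,\mathcal{T}]}$ the infimum defining $F$ reduces to the reciprocal of the $(\mathcal{T}{+}1)$-th smallest coordinate --- hence is $1$-Lipschitz in $\ell^\infty$ by the Lipschitz property of order statistics and of $x\mapsto1/x$ on $[1,\infty)$ --- is a simpler and arguably more robust proof of that lemma than the bump-function/Implicit-Function-Theorem route used in the paper (which tacitly requires convexity of the domain when invoking the Mean Value Theorem).
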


The proof of Theorem \ref{thm:variance} can be found in Appendix \ref{app:proofs} as well. The intuition is that the leave-one-out estimator $\hat{\alpha}^{\mathrm{LOO}}$ is a sum of terms, each of which is very close to independent terms, for which we know the variance of the sum. The remaining task is to bound the variance of the deviation, which we do by using a tail bound. This allows us to upper bound the variance of deviation through integration.\\

\begin{remark}
Note that when $\mathcal{T}$ is constant and the samples are i.i.d., coverage guarantees can be estimated in a straightforward way. For each calibration point $X_i$, let the corresponding prediction set consist of the top-$\mathcal{T}$ scores among $\{S(X_i, y)\}_{y \in \mathcal{Y}}$. One then checks whether the associated label~$Y_i$ lies within this top-$\mathcal{T}$ set. The empirical frequency of miscoverage directly estimates the true miscoverage probability, and standard concentration inequalities can be used to bound its deviation. In contrast, the general case where~$\mathcal{T}$ may depend on the calibration data itself requires a more sophisticated treatment. The simplified constant-size case provides an instructive foundation: it clarifies the main ideas behind our estimator and motivates the proof techniques developed in the more general, data-adaptive setting presented next.
\end{remark}

\subsection{General case}

In the general case, we can also prove that the estimator $\hat{\alpha}^{\rm LOO}$ is consistent, provided that the size constraint rule preserves a form of stability for the miscoverages.

\begin{theorem}
\label{thm:consistent_general}
Assume that the score function $S$ is bounded and takes values in the interval $[S_{\min},S_{\max}]$, with $0<S_{\min}\le S_{\max}$. Suppose that the number of calibration samples satisfies $n > S_{\max}/S_{\min}$. In addition, assume that the size constraint rule and the score function satisfy the following stability property almost surely:
\begin{enumerate}[label=(P\arabic*), ref=P\arabic*]
\setcounter{enumi}{2}
    \item \label{prop:P3} There exists a constant $L>0$ such that: $$\bigg\lvert \sum_{j=1}^n \left(\tilde{\alpha}_j - \mathbb{E}[\tilde{\alpha}]\right)\bigg\rvert \le L\max_{y \in \mathcal{Y}} \bigg\lvert \sum_{j=1}^n \left(\mathbf{E}^j_y - \mathbb{E}\left[\mathbf{\tilde{E}}^{\rm test}_y\right] \right)\bigg\rvert.$$
\end{enumerate}
Then, if the samples $(X_i,Y_i)$ are i.i.d., the leave-one-out estimator satisfies:
\[
\left\lvert \hat{\alpha}^{\rm LOO} - \mathbb{E}[\tilde{\alpha}] \right\rvert = O_P\left(\frac{1}{\sqrt{n}}\right).
\]
\end{theorem}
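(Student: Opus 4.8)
The plan is to reduce Theorem~\ref{thm:consistent_general} to a concentration statement about the averaged pseudo-ratios $\frac{1}{n}\sum_{j=1}^n \mathbf{E}^j_y$ and then invoke Property~(\ref{prop:P3}) to transfer that concentration to $\hat{\alpha}^{\rm LOO}$. Concretely, dividing the inequality in (\ref{prop:P3}) by $n$ gives
\[
\left\lvert \hat{\alpha}^{\rm LOO} - \mathbb{E}[\tilde{\alpha}] \right\rvert \le L \max_{y \in \mathcal{Y}} \left\lvert \frac{1}{n}\sum_{j=1}^n \mathbf{E}^j_y - \mathbb{E}\big[\mathbf{\tilde{E}}^{\rm test}_y\big] \right\rvert,
\]
so it suffices to show each of the finitely many (i.e., $|\mathcal{Y}|$) terms on the right is $O_P(1/\sqrt{n})$; a union bound over $y \in \mathcal{Y}$ then finishes the argument since $|\mathcal{Y}|$ is fixed.

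The core estimate is to bound, for a fixed $y$, the deviation $\big\lvert \frac{1}{n}\sum_{j} \mathbf{E}^j_y - \mathbb{E}[\mathbf{\tilde{E}}^{\rm test}_y]\big\rvert$. First I would split this into two pieces by comparing each $\mathbf{E}^j_y$ to its idealized counterpart $\mathbf{\tilde{E}}^j_y = S(X_j,y)/\mu$: writing $\bar S_{-j} := \frac{1}{n}\big(\sum_{i\neq j} S(X_i,Y_i) + S(X_j,y)\big)$ for the denominator of $\mathbf{E}^j_y$, we have $\mathbf{E}^j_y - \mathbf{\tilde{E}}^j_y = S(X_j,y)\big(\frac{1}{\bar S_{-j}} - \frac{1}{\mu}\big)$. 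Using boundedness of $S$ in $[S_{\min},S_{\max}]$ together with $n > S_{\max}/S_{\min}$ (which keeps $\bar S_{-j}$ bounded away from $0$, uniformly), the factor $\big\lvert \frac{1}{\bar S_{-j}} - \frac{1}{\mu}\big\rvert = \frac{\lvert \mu - \bar S_{-j}\rvert}{\mu \bar S_{-j}}$ is controlled by $\lvert \mu - \bar S_{-j}\rvert$ up to a constant. Since $\bar S_{-j}$ is an average of $n$ bounded terms that differs from the full calibration average $\frac{1}{n}\sum_{i=1}^n S(X_i,Y_i)$ by only $O(1/n)$, Hoeffding's inequality gives $\lvert \mu - \bar S_{-j}\rvert = O_P(1/\sqrt{n})$ with a bound uniform in $j$ (the randomness is essentially the same calibration average for every $j$). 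Averaging over $j$, the first piece $\frac{1}{n}\sum_j (\mathbf{E}^j_y - \mathbf{\tilde{E}}^j_y)$ is therefore $O_P(1/\sqrt{n})$. For the second piece, $\frac{1}{n}\sum_j \mathbf{\tilde{E}}^j_y - \mathbb{E}[\mathbf{\tilde{E}}^{\rm test}_y] = \frac{1}{\mu}\big(\frac{1}{n}\sum_j S(X_j,y) - \mathbb{E}[S(X,y)]\big)$, which is a genuine i.i.d.\ average of bounded random variables and hence $O_P(1/\sqrt{n})$ directly by Hoeffding (note $\mathbb{E}[\mathbf{\tilde{E}}^{\rm test}_y] = \mathbb{E}[S(X,y)]/\mu$ since the test and calibration points are identically distributed). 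Combining the two pieces and maximizing over the finite $\mathcal{Y}$ yields the claim. This essentially recycles the concentration machinery already used for Theorem~\ref{thm:consistent}, so these steps should be routine given that proof.

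The main obstacle, and the reason Property~(\ref{prop:P3}) is imposed as a hypothesis rather than derived, is precisely the step that in the constant-$\mathcal{T}$ case required the Implicit Function Theorem and bump-function smoothing: showing that the map from ratio vectors to the infimum-defined miscoverage $\tilde{\alpha}$ is stable (Lipschitz-like) under perturbations. When $\mathcal{T}$ is data-dependent, perturbing the calibration set perturbs \emph{both} the ratio vectors $\mathbf{E}^j$ \emph{and} the threshold $\mathcal{T}(\{(X_i,Y_i)\}_{i\neq j}, X_j)$, and without further structure the integer-valued output of $\mathcal{T}$ could jump discontinuously, breaking stability entirely. Property~(\ref{prop:P3}) is exactly the assumption that this compound sensitivity is benign — it postulates a uniform constant $L$ bounding the aggregate fluctuation of the pseudo-miscoverages by the aggregate fluctuation of the ratio coordinates. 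So in the general case I would not attempt to re-derive stability; instead I would state clearly that (\ref{prop:P3}) encapsulates it, remark that it is implied by the hypotheses of Theorem~\ref{thm:consistent} in the constant case (so the general theorem subsumes the earlier one), and then carry out only the concentration argument above.
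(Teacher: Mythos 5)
Your proposal follows essentially the same route as the paper's proof: apply Property~(\ref{prop:P3}) to reduce to concentration of the pseudo-ratio averages, decompose each coordinate through the idealized $\mathbf{\tilde{E}}^j_y = S(X_j,y)/\mu$, and bound each resulting piece by Hoeffding-type concentration combined with a union bound over the finite label set, recycling the $T_1$/$T_3$ estimates from the constant-$\mathcal{T}$ proof. The only structural difference is cosmetic and traceable to a discrepancy in the paper itself: the appendix restatement of (\ref{prop:P3}) replaces $\mathbb{E}\big[\mathbf{\tilde{E}}^{\rm test}_y\big]$ by $\mathbb{E}\big[\mathbf{E}^{\rm test}_y\big]$, which forces the paper to carry a third term ($\mathbb{E}[\mathbf{\tilde{E}}^{\rm test}]-\mathbb{E}[\mathbf{E}^{\rm test}]$) that your two-term decomposition, consistent with the statement you were given, avoids. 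One small caveat on your closing remark: the claim that (\ref{prop:P3}) is ``implied by the hypotheses of Theorem~\ref{thm:consistent}'' and hence that the general theorem subsumes the constant-$\mathcal{T}$ one is not established by the paper. In the constant case the proof uses the $1$-Lipschitz property of $F$ termwise and never passes through an inequality of the aggregated form of (\ref{prop:P3}); that aggregated bound does not obviously follow from Lipschitzness of $F$ (in particular the cross term $F(\mathbf{\tilde{E}}^j)-\mathbb{E}[F(\mathbf{\tilde{E}}^{\rm test})]$ is not controlled coordinatewise by $\mathbf{\tilde{E}}^j_y - \mathbb{E}[\mathbf{\tilde{E}}^{\rm test}_y]$), so the two theorems are best viewed as parallel rather than nested.
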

Property (\ref{prop:P3}) implies that the size constraint rule preserves a form of stability between the variations in ratio vectors and their impact on miscoverage: small variations in the ratio vectors do not cause disproportionately large fluctuations in the miscoverages. Specifically, $\tilde{\alpha}_j$ depends on~$\mathbf{E}^j$ and~$\mathcal{T}(\{(X_i, Y_i)\}_{i=1}^n, X_{\rm test})$, while $\tilde{\alpha}$ depends on $\mathbf{E}^{\rm test}$ and $\mathcal{T}(\{(X_i, Y_i)\}_{i=1}^n, X_{\rm test})$. When $\mathcal{T}$ is constant, as assumed in Section \ref{sub:constant}, we can obtain stability results for the miscoverages. However, when it is not constant, we require $\mathcal{T}$ to satisfy this stability condition to derive consistency results on miscoverages. This assumption was not required when $\mathcal{T}$ was constant, since the pseudo-miscoverages based on ratio vectors $\mathbf{\tilde{E}}^j$ we worked with were i.i.d., allowing us to directly apply concentration inequalities. In the general case, we rely on this stability assumption to directly manipulate the i.i.d.\ ratio vectors $\mathbf{\tilde{E}}^j$. Properties (\ref{prop:P1}) and (\ref{prop:P2}) are no longer needed, as they were only used to establish stability. The proof of Theorem \ref{thm:consistent_general} is given in Appendix \ref{app:proofs}.

\section{Experiments}
\label{sec:exp}

We demonstrate through an image classification experiment how the estimator $\hat{\alpha}^{\rm LOO}$ effectively approximates the true miscoverage $\mathbb{E}[\tilde{\alpha}]$, showcasing the effectiveness of Backward Conformal Prediction. In this section, we conduct experiments using a constant size constraint rule $\mathcal{T}$. Additional details and experiments are provided in Appendix \ref{app:experiments}, starting with a binary classification example to motivate the need for controlling prediction set sizes, followed by an image classification experiment using a more complex data-dependent size constraint rule.

Our approach is evaluated on the CIFAR-10 dataset \citep{krizhevsky2009learning}, which consists of 50{,}000 training images and 10{,}000 test images across 10 classes. For prediction, we use an EfficientNet-B0 model \citep{tan2019efficientnet} trained on the full training set. This model $f$ is treated as a black box that yields predictions.  The model is trained to minimize the cross-entropy loss using stochastic gradient descent (SGD) with a learning rate of $0.1$, momentum $0.9$, weight decay $5 \times 10^{-4}$, and cosine annealing over 100 epochs. We use a batch size of 512 and apply standard data augmentation during training. At the end of training, the model $f$ achieves a training accuracy of $98.6\%$ and a test accuracy of $91.1\%$.

We evaluate the estimation $\hat{\alpha}^{\rm LOO} \approx \mathbb{E}[\tilde{\alpha}]$ across various calibration sizes, $n \in \{100, 1000, 5000\}$, and prediction set sizes $\mathcal{T} \in \{1,2,3 \}$. All the experiments are repeated $N = 200$ times. In each run, we sample a calibration set $\{(X_i, Y_i)\}_{i=1}^n$ of size $n$ uniformly at random from the test set. We then compute scores using the cross-entropy loss, defined by:
\[
S(x, y) = -\log p_f(y|x),
\]
where $p_f(y|x)$ is the predicted softmax probability by $f$ for class $y$ given feature $x$.

From the remaining test samples (i.e., not included in the calibration set), we draw one point~$(X_{\rm test},Y_{\rm test})$ uniformly at random to serve as the test point. Based on the test features and the calibration set, we construct a conformal set $\hat{C}_n^{\tilde{\alpha}}(X_{\rm test})$ of size at most $\mathcal{T}$, along with an estimated coverage level $1 - \hat{\alpha}^{\rm LOO}$, using the output of Algorithm~\ref{algorithm}. Both $\tilde{\alpha}$ and the intermediate $\tilde{\alpha}_j$ used to compute $\hat{\alpha}^{\rm LOO}$ are computed via binary search over $\alpha \in (0,1)$: we seek the smallest $\alpha$ such that the number of labels $y$ with $\mathbf{E}^{\rm test}_y < 1/\alpha$ (respectively, $\mathbf{E}^j_y < 1/\alpha$) is at most $\mathcal{T}$. The search stops when the candidate value of $\alpha$ is within a tolerance of 0.005 from the optimal value.

\begin{figure}[ht]
\centering
\includegraphics[width=\linewidth]{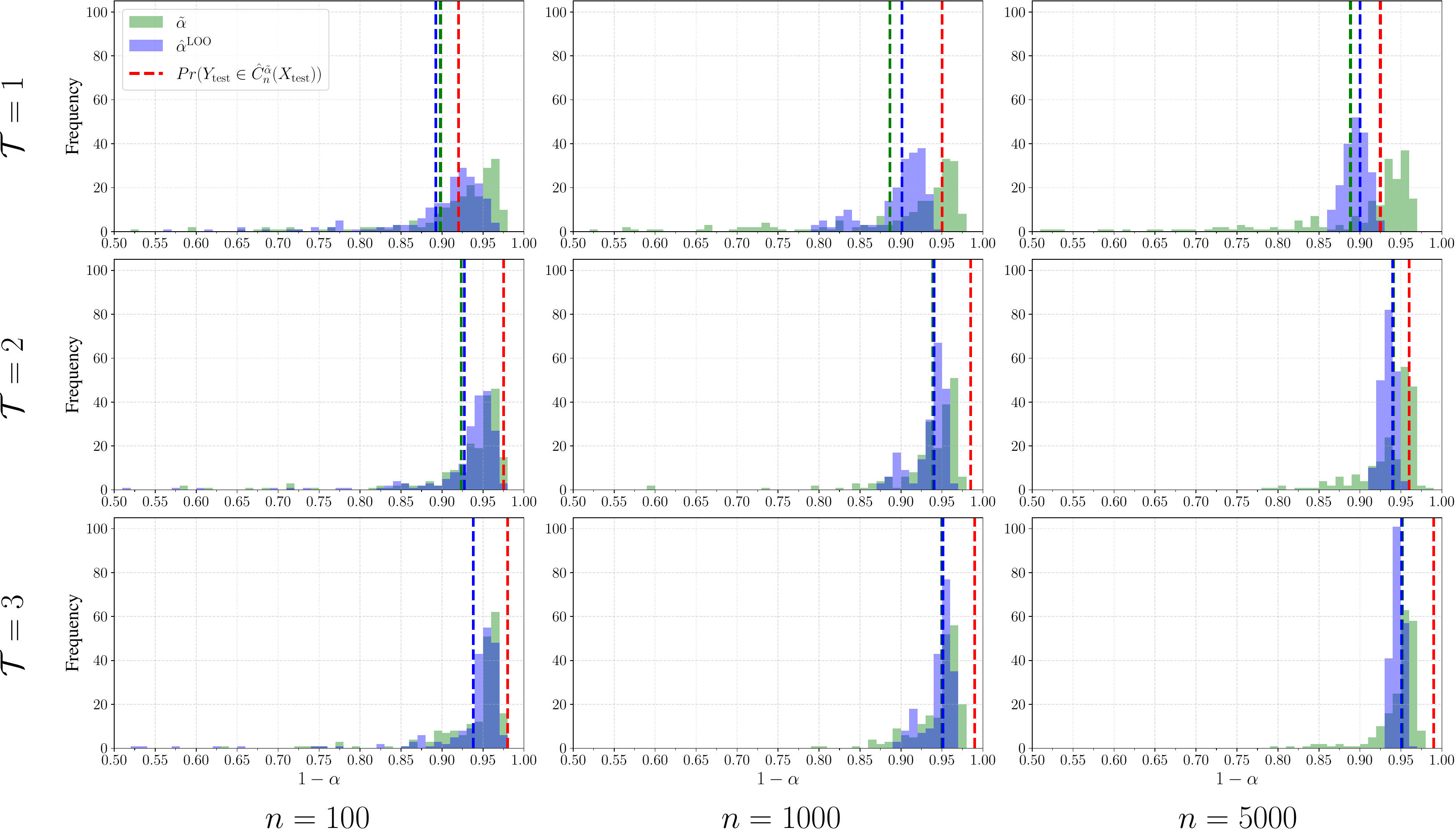} 
\caption{Histograms of $1 - \tilde{\alpha}$ and $1 - \hat{\alpha}^{\rm LOO}$ from $N = 200$ runs for various $(n, \mathcal{T})$ configurations. The red dashed line shows the empirical coverage probability. See text for details.}
\label{fig:grid}
\end{figure}

We report the results averaged over the $N$ runs in Figure \ref{fig:grid}. We plot the histogram of the $N$ values of $1-\tilde{\alpha}$ obtained across runs in green, along with their expected value~$1-\mathbb{E}[\tilde{\alpha}]$ shown as a green dashed line. Overlaid on this, we display the histogram of the corresponding $N$ leave-one-out estimators $1-\hat{\alpha}^{\rm LOO}$ in blue. A blue dashed vertical line indicates the average value of $1-\hat{\alpha}^{\rm LOO}$ for visualization. We also plot the empirical coverage rate~$\mathbb{P}(Y_{\rm test} \in \hat{C}_n^{\tilde{\alpha}}(X_{\rm test}))$ as a red dashed line.

Recall that the true coverage probability $\mathbb{P}(Y_{\rm test} \in \hat{C}_n^{\tilde{\alpha}}(X_{\rm test}))$ satisfies (\ref{bcp}) up to a first-order Taylor approximation. In our experiments, we observe that the empirical coverage probability depicted by the red dashed line consistently exceeds the empirical coverage $1-\mathbb{E}[\tilde{\alpha}]$ depicted by the green dashed line. This behavior supports the validity of the first-order Taylor approximation used to derive~(\ref{bcp}) in our method. Furthemore, while the values of $1-\tilde{\alpha}$ can exceed the coverage probability, especially for small $n$, the coverage always remain above $1-\mathbb{E}[\tilde{\alpha}]$, aligning with our theoretical marginal guarantees. We also observe that the leave-one-out estimators $\hat{\alpha}^{\rm LOO}$ closely approximate the marginal miscoverage $\mathbb{E}[\tilde{\alpha}]$, with the approximation improving as the calibration size~$n$ increases, as suggested by Theorems~\ref{thm:consistent} and~\ref{thm:variance}. Note that the histograms of $\hat{\alpha}^{\rm LOO}$ and $\tilde{\alpha}$ may differ, but this is expected, as $\hat{\alpha}^{\rm LOO}$ targets $\mathbb{E}[\tilde{\alpha}]$ rather than higher-order properties. These results show that practitioners can effectively make use of available calibration data to approximate marginal coverage guarantees through the leave-one-out estimator.

\section{Conclusion}

We introduced Backward Conformal Prediction, a novel framework that prioritizes controlling the prediction set size over guarantees of fixed coverage levels. Built on e-value-based inference, our method enables data-dependent miscoverage and uses a leave-one-out estimator to approximate marginal miscoverage. This approach offers a flexible, principled alternative in settings where interpretability and set size control are critical, such as medical diagnosis. Our theoretical and empirical results show valid coverage guarantees while enforcing a user-specified size constraint. These findings open new avenues for adaptive conformal prediction and data-driven control that go beyond mere coverage.

Backward Conformal Prediction differs from standard conformal prediction in a fundamental way: it allows practitioners to control the size of prediction sets while simultaneously estimating coverage. In standard conformal prediction, coverage is guaranteed, but the size of each prediction set is uncontrolled and can vary widely.
Our approach provides more actionable information, enabling practitioners to adjust the size constraint if the resulting coverage is too low. While the coverage guarantee in Backward Conformal Prediction might be slightly conservative due to the use of Markov’s inequality, this trade-off allows for principled decisions about the balance between set size and reliability, offering a more informative and flexible framework for practical applications.

While we focused on miscoverage definitions with fixed set sizes, the method is applicable to any data-dependent miscoverage. Future work could explore alternative formulations and investigate real-world applications under different constraint rules. Additionally, the stability assumption in Theorem \ref{thm:consistent_general} could likely be relaxed under regularity conditions.

The size constraint rule also affects the marginal miscoverage $\mathbb{E}[\tilde{\alpha}]$, and dynamic, anytime adjustments to this rule based on the test sample could be explored, leveraging the leave-one-out estimator for real-time miscoverage estimation.

Finally, although our analysis focuses on classification, the method can also be applied to regression, which could be a valuable direction to explore.

\begin{ack}
The authors would like to thank the reviewers for their thoughtful comments, which allowed us to clarify and emphasize the benefits of our method.

Funded by the European Union (ERC-2022-SYG-OCEAN-101071601).
Views and opinions expressed are however those of the author(s) only and do not
necessarily reflect those of the European Union or the European Research Council
Executive Agency. Neither the European Union nor the granting authority can be
held responsible for them. 

This publication is part of the Chair ``Markets and Learning,'' supported by Air Liquide, BNP PARIBAS ASSET MANAGEMENT Europe, EDF, Orange and SNCF, sponsors of the Inria Foundation.

This work has also received support from the French government, managed by the National Research Agency, under the France 2030 program with the reference ``PR[AI]RIE-PSAI" (ANR-23-IACL-0008).
\end{ack}

\bibliographystyle{plainnat} 
\bibliography{references}


\newpage 


\appendix

\section{Further related work}

In this paper, we focus on the split conformal prediction setting \citep{papadopoulos2002inductivecm}, where proper training data is used to pre-train a model, and a separate calibration set is then formed from other data to construct conformal sets. Conformal prediction has become a widely adopted technique due to its reliability, particularly in high-stakes or resource-constrained settings, such as medical diagnostics (e.g., \citet{Olsson2022estimatingdiagnostic}). This widespread applicability has spurred considerable interest in improving the informativeness of conformal sets, with the goal of providing more precise and actionable predictions. 

However, traditional conformal prediction frameworks suffer from a key limitation: the miscoverage rate is predetermined, and the resulting conformal sets are not controllable. To address this, recent research has explored methods that allow for adaptive, data-dependent miscoverage rates. Notable contributions include the work of \citet{sarkar2023post}, which proposes constructing valid coverage guarantees simultaneously for each miscoverage level with high probability. Their approach relies on constructing confidence bands for cumulative distribution functions, which is analogous to the simultaneous inference framework of post-selection inference introduced by \citet{berk2013post}. However, one limitation of this method is that it relies on the i.i.d.\ assumption. Similar techniques with post-hoc miscoverage have been used in various conformal prediction setups, such as in the transductive setting where multiple test scores are considered \citep{gazin2024transductive}, in the multiple testing setting \citep{wang2024multipletesting}, or in risk control \citep{nguyen2024dataadaptivetradeoffsmultiplerisks}. 

Another important related direction is the work of \citet{cherian2024llmvalidity}, which combines two powerful ideas: conformal prediction with conditional coverage guarantees, and level-adaptive conformal prediction. In their framework, the miscoverage level $\alpha(.)$ is treated as a learned function, optimized to satisfy some quality criterion. In contrast, our goal is not to learn $\alpha(.)$, but rather to estimate the marginal guarantees that result from applying a user-specified constraint on the size of the conformal set. The level $\alpha$ in our case is directly derived to satisfy this constraint. Additionally, we allow the level $\alpha$ to depend not only on the test features but also on the calibration set, which introduces further flexibility in adapting the coverage to the data. 

We propose a more direct approach based on e-values \citep{vovk2021evalues, grunwald2024safetesting}, a robust alternative to p-values that offers several advantages such as stronger data-dependent Type-I error guarantees, which we leverage in this work. Our method yields marginal coverage guarantees, and we provide a practical procedure to estimate the corresponding marginal miscoverage term.

\section{Further experimental details}
\label{app:experiments}

All experiments were run on a machine with a 13th Gen Intel\textregistered \  Core\texttrademark \ i7-13700H CPU and it typically takes
0.1-1.5 hours for each trial, depending on the calibration size $n$. First, we provide additional details on the experiment conducted in Section \ref{sec:exp}, followed by an illustration of Backward Conformal Prediction applied to other experiments.

\subsection{Additional details for the experiment in Section \ref{sec:exp}}

For completeness, we provide samples of histograms of $\tilde{\alpha}_j$ for $j=1,\dots,n$ in Figures \ref{fig:alpha_loo} and \ref{fig:alpha_loo2}, which are used to compute the aggregated estimate $\hat{\alpha}^{\rm LOO}$. These samples correspond to the histograms obtained from a single run of the experiment conducted in Section \ref{sec:exp}. We use the same setup as in that section, with a constant size constraint rule $\mathcal{T}=1$ and a calibration size of $n=5000$.

\begin{figure}[h!]
\centering
\includegraphics[width=.54\linewidth]{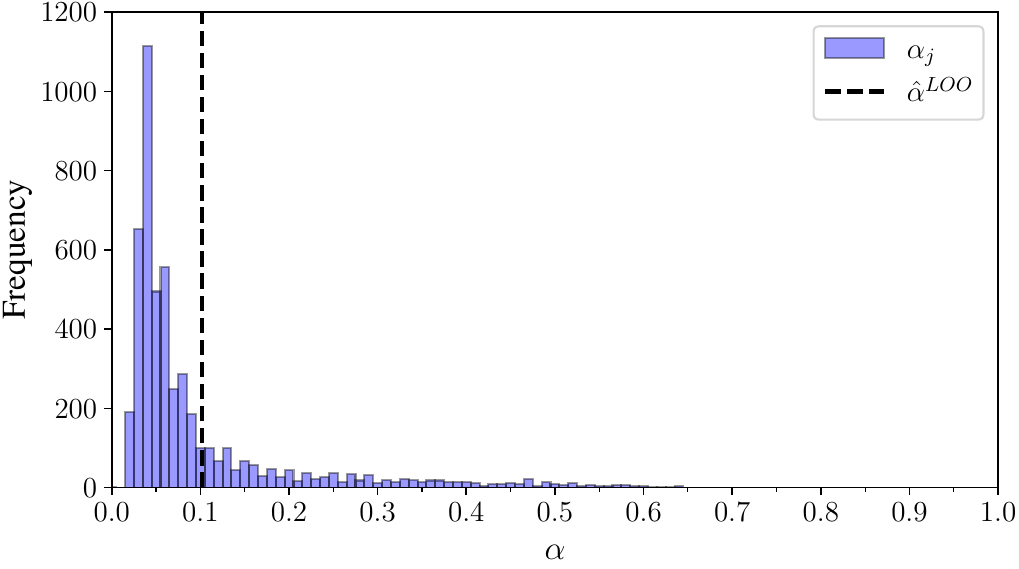} 
\caption{Sample histogram 1 of the values $\tilde{\alpha}_j$, for $j = 1, \dots, n$, used to compute $\hat{\alpha}^{\rm LOO}$ with $\mathcal{T}=1$ and $n = 5000$.}
\label{fig:alpha_loo}
\end{figure}

\begin{figure}[h!]
\centering
\includegraphics[width=.54\linewidth]{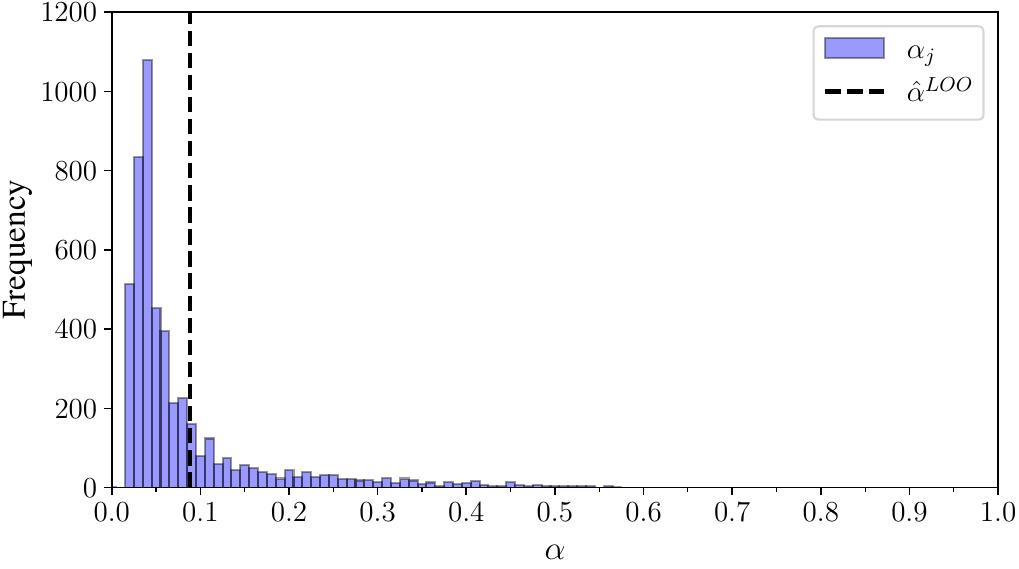} 
\caption{Sample histogram 2 of the values $\tilde{\alpha}_j$, for $j = 1, \dots, n$, used to compute $\hat{\alpha}^{\rm LOO}$ with $\mathcal{T}=1$ and $n = 5000$.}
\label{fig:alpha_loo2}
\end{figure}

\subsection{Introductory setting: binary classification}

Now, we highlight the importance of controlling the size of prediction sets with a binary classification example.

In binary classification, conformal prediction yields prediction sets of the form $\varnothing$, $\{0\}$, $\{1\}$, or $\{0, 1\}$. However, applying standard conformal methods with a fixed miscoverage level $\alpha$ can often result in trivial prediction sets equal to $\{0, 1\}$, which offer no information. Ideally, one would prefer prediction sets of size at most $\mathcal{T}=1$, providing decisive and actionable predictions.

We perform binary classification experiments on the Breast Cancer Wisconsin (Diagnostic) dataset \citep{breast_cancer_wisconsin_(diagnostic)_17}. The dataset consists of 569 instances, each labeled as either benign ($y=0$) or malignant ($y=1$), with 30 real-valued features computed from digitized images of fine needle aspirates of breast masses.

We randomly split the dataset into 70\% training and 30\% testing data. We train an XGBoost classifier~$f$ using cross-entropy \citep{chen2016xgboost}. The trained model $f$ achieves an accuracy of 97.7\% on the test set.

We repeat the experiment $N = 200$ times. In each iteration, we randomly sample a calibration set~$\{(X_i, Y_i)\}_{i=1}^n$, consisting of half of the original test set, drawn uniformly at random. We compute conformity scores using the cross-entropy loss, defined for binary classification as
\[
S(x, y) = -\left[ y \log(f(x)) + (1 - y) \log(1 - f(x)) \right],
\]
where $f(x)$ denotes the predicted probability of class $1$ produced by the pre-trained model $f$.

From the remaining test points (i.e., those not included in the calibration set), we select one sample~$(X_{\text{test}}, Y_{\text{test}})$ uniformly at random to serve as the test input. 

When applying standard conformal prediction methods with a fixed miscoverage level, the resulting conformal sets may turn out to be trivial. For instance, over $N = 200$ runs with a fixed miscoverage level $\alpha = 0.02$, we obtain 1 empty conformal set, 166 informative sets of size 1, and 33 uninformative sets of size 2. This means that out of a total of $200$ patients, 34 of them receive prediction sets that offer no insight into their diagnosis. 

To address this issue, we instead use Backward Conformal Prediction with a fixed prediction set size~$\mathcal{T} = 1$. Based on $X_{\text{test}}$ and the calibration set, we compute an adaptive miscoverage $\tilde{\alpha}$ following~(\ref{eq:def-alpha-tilde}), and construct the corresponding conformal prediction set $\hat{C}_n^{\tilde{\alpha}}(X_{\text{test}})$ of size at most~$\mathcal{T}=1$.

\begin{figure}[ht]
\centering
\includegraphics[width=.54\linewidth]{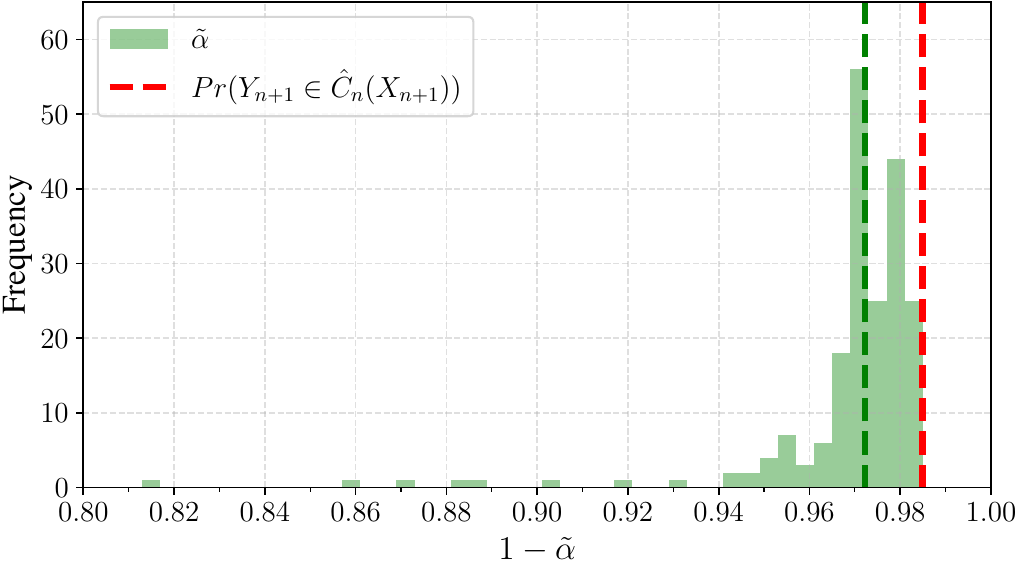} 
\caption{Plot of the adaptive miscoverage levels $\tilde{\alpha}$ and the empirical coverage rate $\mathbb{P}(Y_{\text{test}} \in \hat{C}_n^{\tilde{\alpha}}(X_{\text{test}}))$ on the binary classification task, using Backward Conformal Prediction with fixed set size $\mathcal{T} = 1$.}
\label{fig:binary}
\end{figure}

The adaptive miscoverage $\tilde{\alpha}$ as described in Equation~(\ref{eq:def-alpha-tilde}) is computed using binary search with tolerance 0.005. For each run, we record the corresponding value of $\tilde{\alpha}$ and the resulting prediction set $\hat{C}_n^{\tilde{\alpha}}(X_{\text{test}})$. In Figure \ref{fig:binary}, we plot the histogram of the $N$ values of $1-\tilde{\alpha}$ obtained across runs in green, with the mean $1-\mathbb{E}[\tilde{\alpha}]$ shown as a green dashed line. We also report the empirical coverage probability $\mathbb{P}(Y_{\text{test}} \in \hat{C}_n^{\tilde{\alpha}}(X_{\text{test}}))$ as a red dashed line. We observe that the coverage probability~$\mathbb{P}(Y_{\rm test} \in \hat{C}_n^{\tilde{\alpha}}(X_{\rm test}))$ consistently exceeds the marginal coverage $1-\mathbb{E}[\tilde{\alpha}]$, providing additional empirical support for the Taylor approximation underlying the marginal guarantee (\ref{bcp}).

Note that even when using standard conformal prediction methods with a fixed miscoverage level set to $\alpha = \mathbb{E}[\tilde{\alpha}]$, we still obtain 196 conformal sets of size 1 and 4 sets of size 2. As a result, some patients receive prediction sets that provide no informative guidance. This simple binary classification example highlights the need for adaptive control of prediction set sizes, which can be addressed using Backward Conformal Prediction.

\subsubsection{Adaptive size constraint rule}

We also illustrate Backward Conformal Prediction with a size constraint rule that adapts to the data. Intuitively, features associated with high label variability, as estimated on the calibration set, correspond to greater uncertainty and should be assigned larger prediction sets. We define the size constraint rule $\mathcal{T}$ in a data-dependent manner, adapting the desired prediction set size to the local uncertainty around each test point.

We begin by extracting features from a pretrained ResNet-18 network \citep{he2016resnet}, and project them onto a two-dimensional space using principal component analysis (PCA); for a comprehensive treatment of PCA, see \citet{jolliffe2002principal}. For each $X_i$ in the calibration set, we identify its $k$ nearest neighbors in the PCA space, denoted $\mathcal{N}_k(X_i)$, and collect the corresponding labels $\{Y_j~:~j~\in~\mathcal{N}_k(X_i)\}$.

We then compute the empirical label distribution in the neighborhood of $X_i$,
$$ \hat{p}_{X_i}(c) = \frac{1}{k} \sum_{j \in \mathcal{N}_k(X_i)} \mathbb{1}\{Y_j = c\}, \quad \text{for } c \in \{1, \dots, \lvert\mathcal{Y}\rvert\}, $$
and define the local label entropy as
$$ H(X_i) = - \sum_{c=1}^{\lvert\mathcal{Y}\rvert} \hat{p}_{X_i}(c) \log_2 \hat{p}_{X_i}(c). $$
This yields a scalar uncertainty score for each calibration point. We compute $H(X_{\rm test})$ similarly for any test point $X_{\rm test}$, by considering its $k$ nearest neighbors among the calibration features.

To map the local entropy to a discrete prediction set size, we define a binning function~$b:~\mathbb{R}~\to~\{\mathcal{T}_{\min}, \dots, \mathcal{T}_{\max}\}$, where $\mathcal{T}_{\min}$ and $\mathcal{T}_{\max}$ denote the minimum and maximum allowable sizes for the prediction sets.

Given the entropy value $H(X_{\text{test}})$ around a test point $X_{\text{test}}$, the size constraint rule is defined as:
\begin{equation}
\label{def_adaptive_rule}
\mathcal{T}(\{(X_i, Y_i)\}_{i=1}^n, X_{\text{test}}) = b(H(X_{\text{test}})).
\end{equation}

To construct $b$, we partition the range of entropy values observed on the calibration set into~$L~=~\mathcal{T}_{\max}~-~\mathcal{T}_{\min}~+~1$ bins. These bins are defined by thresholds:
$$
\text{bins} = \left\{ \min_{i=1,\dots,n} H(X_i) + \left(\max_{i=1,\dots,n} H(X_i) - \min_{i=1,\dots,n} H(X_i)\right) \cdot \left(\frac{\ell-1}{L-1}\right)^{1/2} \right\}_{\ell=1}^{L},
$$
where the exponent $1/2$ is arbitrary and skews the binning towards low entropy regions.

The binning function $b$ then maps an entropy value $h$ to a size $\mathcal{T}_{\min} + \ell$, where $\ell$ is the index of the bin containing $h$. That is,
$$
b(h) = \mathcal{T}_{\min} + \sum_{\ell=1}^{L-1} \mathbb{1}\{ h > \text{bins}_\ell \}.
$$

In summary, $\mathcal{T}(\{(X_i, Y_i)\}_{i=1}^n, X_{\text{test}})$ is larger when the local neighborhood of $X_{\text{test}}$ exhibits high label variability, leading to broader prediction sets in more ambiguous regions of the input space.

\begin{figure}[ht]
    \centering
    \begin{minipage}{0.49\textwidth}
        \centering
        \includegraphics[width=\textwidth]{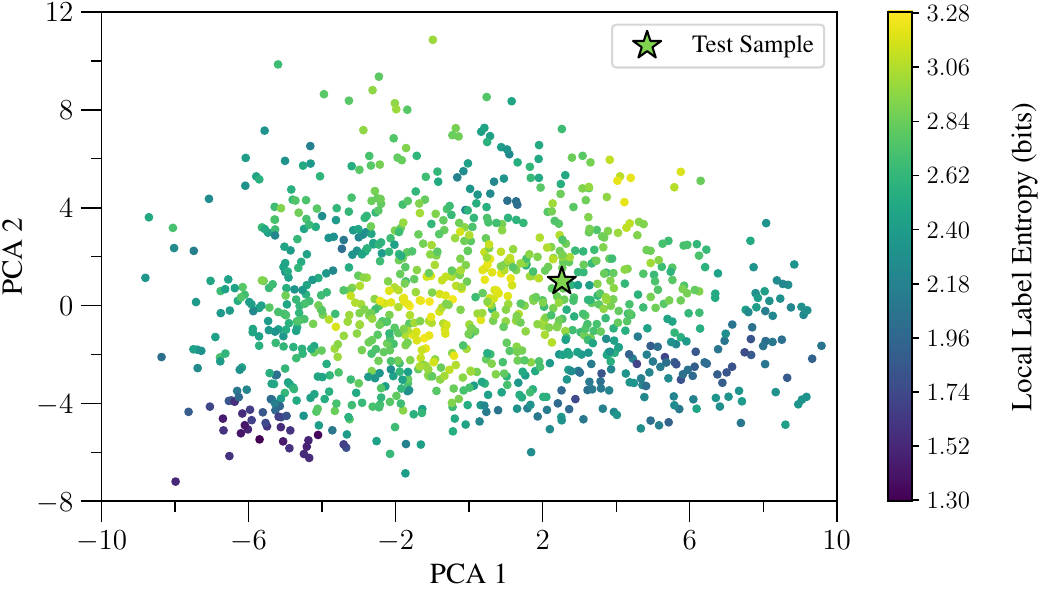}
        \subcaption*{Local entropy of test point: 2.648.}
    \end{minipage}%
    \hfill
    \begin{minipage}{0.49\textwidth}
        \centering
        \includegraphics[width=\textwidth]{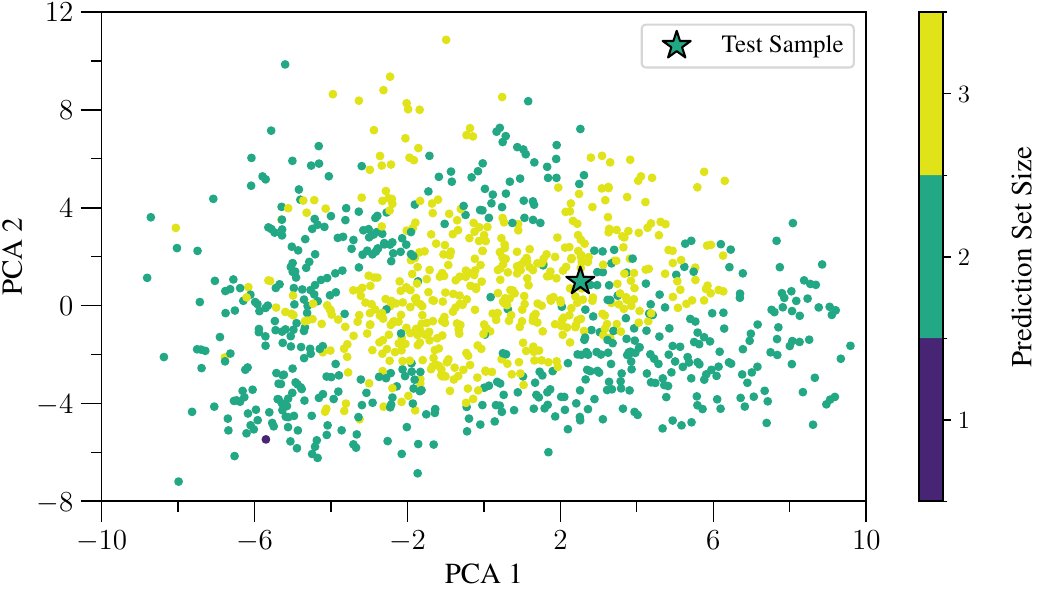}
        \subcaption*{Prediction set size for the test sample: 2.}
    \end{minipage}    
    \caption{Illustration of the size constraint rule on the calibration and test samples. \textbf{Left:} local label entropy in PCA space. \textbf{Right:} corresponding values of the size constraint rule $\mathcal{T}$ defined in Equation~(\ref{def_adaptive_rule}) applied to both calibration and test points.}
    \label{fig:size_constraint_rule}
\end{figure}

We repeat the same experiment as in the previous subsection with constant $\mathcal{T}$, except this time we use the adaptive, data-dependent rule defined above with $\mathcal{T}_{\min}=1$, $\mathcal{T}_{\max}=3$, and $k=20$. An illustration of the adaptive size constraint rule $\mathcal{T}$ defined in Equation (\ref{def_adaptive_rule}) with $n = 1000$ is provided in Figure~\ref{fig:size_constraint_rule}. The corresponding coverages are presented in Figure~\ref{fig:adaptive}.

\begin{figure}[ht]
    \centering
    \begin{minipage}{0.32\textwidth}
        \centering
        \includegraphics[width=\textwidth]{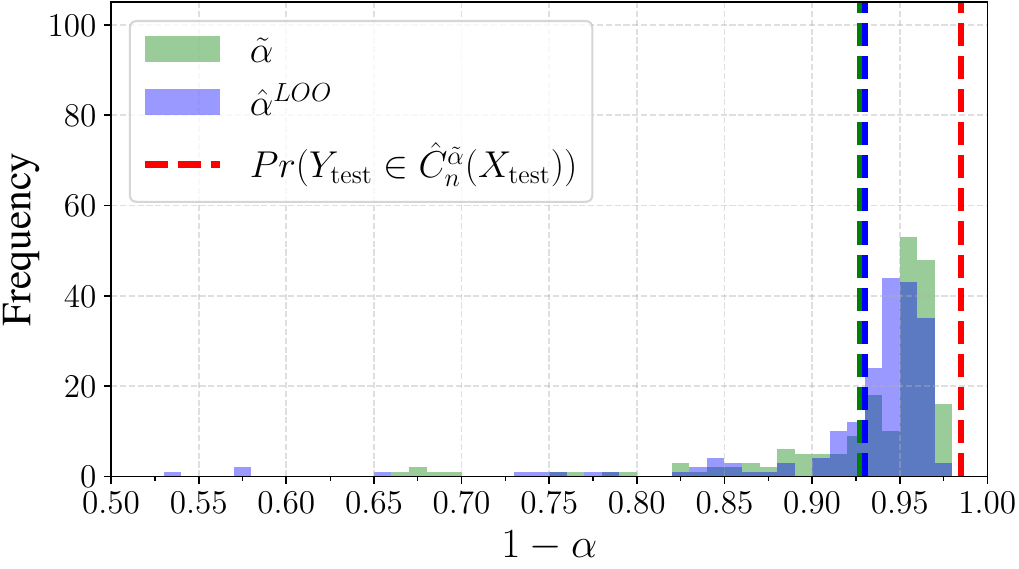}
        \subcaption*{$n=100$}
    \end{minipage}%
    \hfill
    \begin{minipage}{0.32\textwidth}
        \centering
        \includegraphics[width=\textwidth]{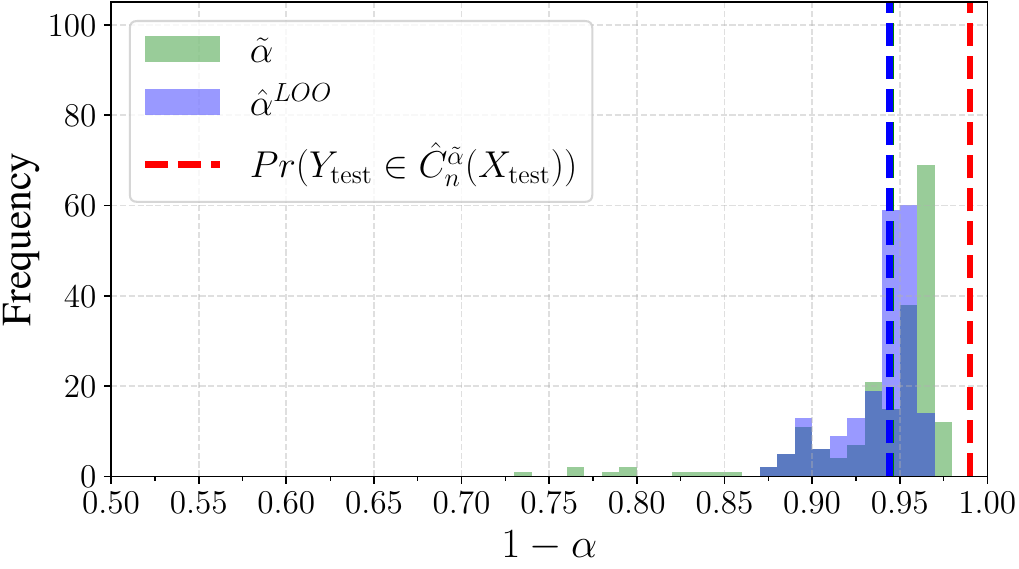}
        \subcaption*{$n=1000$}
    \end{minipage}%
    \hfill
    \begin{minipage}{0.32\textwidth}
        \centering
        \includegraphics[width=\textwidth]{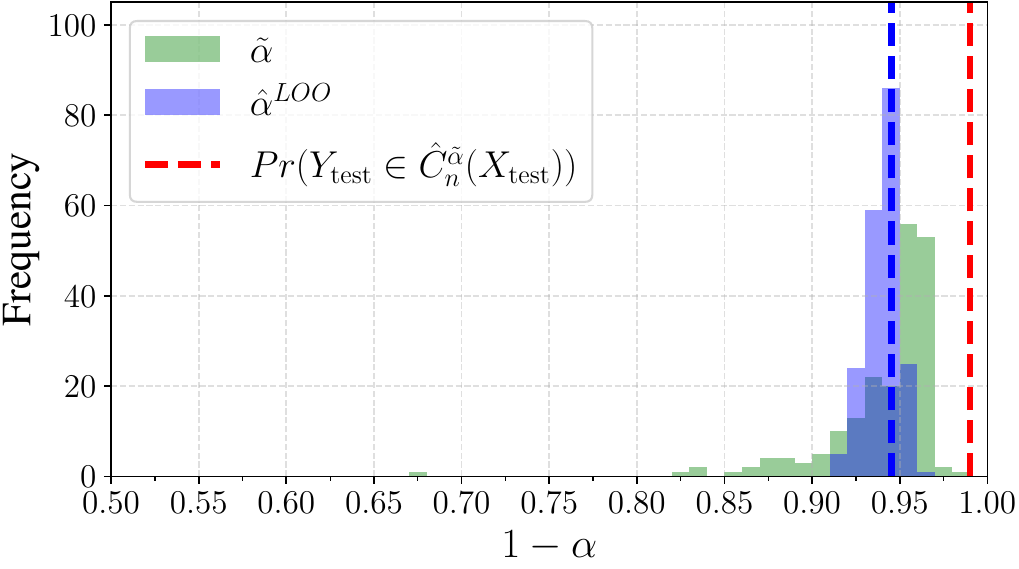}
        \subcaption*{$n=5000$}
    \end{minipage}
    
    \caption{Results for the adaptive $\mathcal{T}$ defined in Equation (\ref{def_adaptive_rule}).}
    \label{fig:adaptive}
\end{figure}

We observe that the coverage probability exceeds the empirical coverage, supporting the validity of the Taylor approximation used to derive (\ref{bcp}). Additionally, as in the constant case, the estimator~$\hat{\alpha}^{\text{LOO}}$ increasingly approximates the marginal miscoverage $\mathbb{E}[\tilde{\alpha}]$ as $n$ grows, in line with Theorem \ref{thm:consistent_general}. These observations suggest that Backward Conformal Prediction provides a principled approach to uncertainty quantification in machine learning yielding controlled-size prediction sets, even when the set size is dynamically adapted based on the data.

\section{Proofs for Section \ref{sec:theory}}
\label{app:proofs}

\subsection{Constant size constraint rule}


Properties (\ref{prop:P1}) and (\ref{prop:P2}) allow us to restrict the analysis to the open subset of $\mathbb{R}_+^{\lvert\mathcal{Y}\rvert}$ defined by
\begin{align*}
\mathbb{R}_+^{\lvert\mathcal{Y}\rvert, [1,\mathcal{T}]} &:= \left\{\mathbf{E} = (\mathbf{E}_y)_{y \in \mathcal{Y}} \in \mathbb{R}_+^{\lvert\mathcal{Y}\rvert} \mid 1 \le \#\left\{y: \mathbf{E}_y <1\right\} \le \mathcal{T}, \ \mathbf{E}_y \neq 1 \ \forall y \in \mathcal{Y}\right\}\\
&= \bigcup_{\substack{I \subseteq \{1, \dots, \lvert\mathcal{Y}\rvert\} \\ 1 \le |I| \le \mathcal{T}}} \left\{\mathbf{E}  = (\mathbf{E}_y)_{y \in \mathcal{Y}} \in \mathbb{R}_+^{\lvert\mathcal{Y}\rvert} \mid \mathbf{E}_i < 1 \ \forall i \in I,\ \mathbf{E}_j > 1 \ \forall j \not\in I \right\},
\end{align*}
which is an open set as an union of finite intersections of open subsets of $\mathbb{R}_+^{\lvert\mathcal{Y}\rvert}$. In the following, we denote by $\lVert.\rVert$ the infinity norm on $\mathbb{R}_+^{\lvert\mathcal{Y}\rvert, [1,\mathcal{T}]}$.

We introduce:

\begin{equation}
\label{defF}
F: \begin{array}{rcl}
\mathbb{R}_+^{\lvert\mathcal{Y}\rvert, [1,\mathcal{T}]} & \longrightarrow & [0,1] \\
\mathbf{E} = (\mathbf{E} _y)_{y \in \mathcal{Y}} & \longmapsto & \inf \left\{ \alpha \in (0,1) : \# \left\{y : \mathbf{E} _y < 1/\alpha \right\} \le \mathcal{T} \right\}
\end{array}
\end{equation}

as a tool to study the behavior of $\tilde{\alpha}$ defined in (\ref{eq:def-alpha-tilde}). 

\subsubsection{Stability of $F$}

In the definition of $F$, the cardinality $\# \left\{y : \mathbf{E}_y < 1/\alpha \right\}$ is a sum of indicator functions: $\sum_y \mathbb{1}_{\{\mathbf{E}_y < 1/\alpha\}}$ which is non-differentiable. 
For $\lambda > 0$, the bump: $\sigma(\lambda(1/\alpha - \mathbf{E}_y))$, where:
\begin{equation}
\label{bump}
\sigma(x) = \begin{cases} 
\exp\left(-\frac{1}{x}\right) & \text{if } x > 0, \\
0 & \text{if } x \leq 0,
\end{cases}
\end{equation}
acts as a soft indicator: it is close to $1$ when the inequality $\mathbf{E}_y < 1/\alpha$ is satisfied and close to $0$ otherwise. Note that
\[
\sigma'(x) = 
\begin{cases} 
\frac{1}{x^2} \exp\left(-\frac{1}{x}\right) & \text{if } x > 0, \\
0 & \text{if } x \leq 0.
\end{cases}
\]

\begin{figure}[ht]
\centering
\includegraphics[width=.5\linewidth]{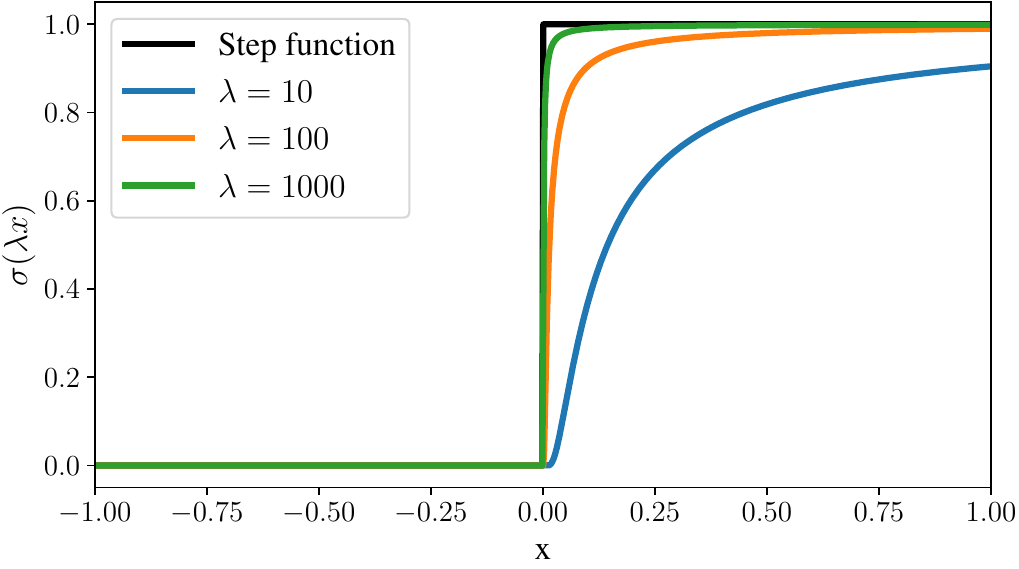} 
\caption{Visualization of the smooth approximation $\sigma(\lambda x)$ to the step function for various values of~$\lambda$. As $\lambda$ increases, the transition from 0 to 1 becomes sharper, making $\sigma(\lambda x)$ a closer approximation to the discontinuous step function.}
\label{fig:bump}
\end{figure}

In our analysis, we approximate the non-differentiable cardinality condition using a smooth surrogate by replacing the indicator function with this bump approximation. Specifically, we consider:

\begin{equation}
\label{defFlambda}
F_\lambda: \begin{array}{rcl}
\mathbb{R}_+^{\lvert\mathcal{Y}\rvert, [1,\mathcal{T}]} & \longrightarrow & [0,1] \\
\mathbf{E} = (\mathbf{E} _y)_{y \in \mathcal{Y}} & \longmapsto & \inf \left\{ \alpha \in (0,1) : \sum_{y \in \mathcal{Y}} \sigma(\lambda(1/\alpha - \mathbf{E}_y)) \le \mathcal{T} \right\}
\end{array}
\end{equation}

for $\lambda > 0$. This replacement ensures differentiability with respect to $\alpha$, and recovers the original cardinality condition in the limit $\lambda \rightarrow \infty$.

\begin{figure}[ht]
\centering
\includegraphics[width=.5\linewidth]{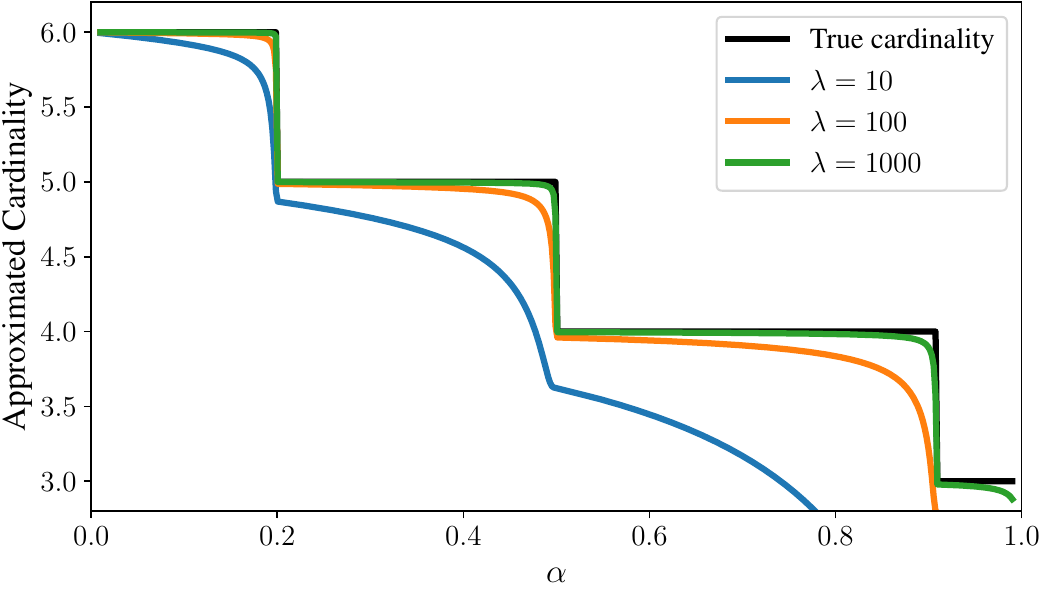} 
\caption{Illustration of the difference between $\# \left\{y : \mathbf{E}_y < 1/\alpha \right\}$ (in black) and its smooth approximation $\sigma(\lambda(1/\alpha - \mathbf{E}_y))$. As $\lambda$ increases, $\sigma(\lambda(1/\alpha - \mathbf{E}_y))$ recovers the original cardinality.}
\label{fig:approximatedcardinality}
\end{figure}

We now show that this smooth surrogate is meaningful by proving that the associated infimum $F_\lambda$ converges pointwise to $F$ as $\lambda \rightarrow \infty$.

\begin{lemma}
\label{lma:pointwise-cv}
Let $\mathbf{E} \in \mathbb{R}_+^{\lvert \mathcal{Y} \rvert, [1,\mathcal{T}]}$. Then,
\[
\lim_{\lambda \to \infty} F_\lambda(\mathbf{E}) = F(\mathbf{E}).
\]
\end{lemma}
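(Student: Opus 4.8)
The plan is to unwind both infima explicitly and show that the defining sets converge. Write $\alpha^\star := F(\mathbf{E})$ and $\alpha_\lambda := F_\lambda(\mathbf{E})$. The key structural fact is that the map $\alpha \mapsto \#\{y : \mathbf{E}_y < 1/\alpha\}$ is nondecreasing in $\alpha$ (as $\alpha$ grows, $1/\alpha$ shrinks, so fewer $y$ satisfy the strict inequality — wait, I need to be careful about orientation), and likewise $\alpha \mapsto \sum_y \sigma(\lambda(1/\alpha - \mathbf{E}_y))$ is continuous and monotone in $\alpha$ since $\sigma$ is nondecreasing and $1/\alpha$ is monotone. Because the constraint in $F$ is ``$\le \mathcal{T}$'' and the cardinality is a step function that, thanks to Property (\ref{prop:P2}), has no jump exactly at $1/\alpha = 1$, the infimum $\alpha^\star$ is attained at a value where the cardinality drops to $\mathcal{T}$ from above; moreover, under Properties (\ref{prop:P1})--(\ref{prop:P2}) the infimum lies in the open interval $(0,1)$ and is a genuine crossing point, not a boundary artifact.

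First I would sort the coordinates $\mathbf{E}_{(1)} \le \dots \le \mathbf{E}_{(|\mathcal{Y}|)}$ and observe that $\#\{y : \mathbf{E}_y < 1/\alpha\} = k$ exactly on the interval $1/\alpha \in (\mathbf{E}_{(k)}, \mathbf{E}_{(k+1)}]$, so $F(\mathbf{E}) = 1/\mathbf{E}_{(\mathcal{T}+1)}$ (using Property~(\ref{prop:P2}) to ensure strict separation of coordinates from the threshold and hence that $\mathbf{E}_{(\mathcal{T})} < 1 < \mathbf{E}_{(\mathcal{T}+1)}$ is possible, and Property~(\ref{prop:P1}) to ensure $\mathcal{T}+1 \le |\mathcal{Y}|$ so this coordinate exists). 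Then for the smooth version, at $\alpha = \alpha^\star = 1/\mathbf{E}_{(\mathcal{T}+1)}$ we have $1/\alpha^\star = \mathbf{E}_{(\mathcal{T}+1)}$, so $\sigma(\lambda(1/\alpha^\star - \mathbf{E}_{(k)}))$ tends to $1$ for $k \le \mathcal{T}$ (arguments strictly positive and bounded away from $0$ by the gaps between distinct order statistics — here I may need Property~(\ref{prop:P2}) again or simply note coordinates below the threshold are $<1$), to $0$ for $k \ge \mathcal{T}+2$ (arguments strictly negative), and the term $k = \mathcal{T}+1$ contributes $\sigma(0) = 0$. Hence $\sum_y \sigma(\lambda(1/\alpha^\star - \mathbf{E}_y)) \to \mathcal{T}$ as $\lambda \to \infty$.

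The convergence $F_\lambda(\mathbf{E}) \to F(\mathbf{E})$ then follows by a two-sided squeeze. For the upper bound: fix $\varepsilon > 0$ small; at $\alpha = \alpha^\star + \varepsilon$ we have $1/\alpha < \mathbf{E}_{(\mathcal{T}+1)}$, so at most $\mathcal{T}$ coordinates are $< 1/\alpha$ and those that are not are $\ge 1/\alpha$ with a uniform positive gap, whence for $\lambda$ large the smooth sum at $\alpha^\star+\varepsilon$ is $< \mathcal{T}$, so $F_\lambda(\mathbf{E}) \le \alpha^\star + \varepsilon$. For the lower bound: at $\alpha = \alpha^\star - \varepsilon$ we have $1/\alpha > \mathbf{E}_{(\mathcal{T}+1)}$, so at least $\mathcal{T}+1$ coordinates are $< 1/\alpha$ with a uniform positive gap, whence for $\lambda$ large the smooth sum exceeds $\mathcal{T}$ not just at $\alpha^\star-\varepsilon$ but on a whole left-neighborhood, giving $F_\lambda(\mathbf{E}) \ge \alpha^\star - \varepsilon$; combined with monotonicity/continuity of the smooth constraint in $\alpha$ this pins $F_\lambda(\mathbf{E})$ into $[\alpha^\star - \varepsilon, \alpha^\star + \varepsilon]$ for all large $\lambda$. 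Letting $\varepsilon \to 0$ concludes.

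I expect the main obstacle to be the bookkeeping around the boundary of $(0,1)$ and the exact-threshold coordinate: one must verify that $\alpha^\star \in (0,1)$ genuinely (so the infimum over $(0,1)$ is not vacuous or attained at an endpoint), which is exactly what Properties~(\ref{prop:P1}) and~(\ref{prop:P2}) buy us — $\mathbf{E}_{(\mathcal{T})} < 1$ forces $\alpha^\star = 1/\mathbf{E}_{(\mathcal{T}+1)}$, and $\mathbf{E}_{(\mathcal{T}+1)} > 1$ (which holds since at most $\mathcal{T}$ coordinates are below $1$) forces $\alpha^\star < 1$, while $\mathbf{E}_{(\mathcal{T}+1)} < \infty$ forces $\alpha^\star > 0$. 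The other delicate point is the uniformity of the $\sigma$-convergence: one must extract an explicit positive gap $\eta := \min_k |1/\alpha - \mathbf{E}_{(k)}|$ away from the threshold and use that $\sigma(\lambda \eta) \to 1$ and $\sigma(-\lambda\eta) = 0$, so that the finite sum of $|\mathcal{Y}|$ terms is controlled. Neither step is deep, but both require care to state correctly, and I would present them as short sub-claims before assembling the squeeze.
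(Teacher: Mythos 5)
Your proof is correct and follows essentially the same two-sided squeeze argument as the paper's, using the pointwise convergence of the smoothed count to the true cardinality on either side of $F(\mathbf{E})$. The paper's version is marginally leaner — it gets $F_\lambda(\mathbf{E}) \le F(\mathbf{E})$ exactly for every $\lambda$ from the observation $\sigma(\lambda x) \le \mathbb{1}_{\{x>0\}}$, rather than approximately via $\alpha^\star + \varepsilon$, and it never sorts the coordinates or writes out $F(\mathbf{E}) = 1/\mathbf{E}_{(\mathcal{T}+1)}$ explicitly, working directly with the feasibility sets — but these are cosmetic rather than structural differences.
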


\begin{proof}
    Define the following two functions for any $\alpha \in (0,1)$:\[f(\mathbf{E}, \alpha) := \# \left\{y : \mathbf{E}_y < 1/\alpha \right\} = \sum_{y \in \mathcal{Y}} \mathbb{1}_{\{\mathbf{E}_y < 1/\alpha\}},\]
    \[f_\lambda(\mathbf{E}, \alpha) := \sum_{y \in \mathcal{Y}} \sigma(\lambda(1/\alpha - \mathbf{E}_y)),\]
    where $\sigma$ is the bump function defined in (\ref{bump}).
    
    By construction, we have for all $\lambda > 0$ and $\alpha \in (0,1)$: 
    \[f_\lambda(\mathbf{E}, \alpha) \le f(\mathbf{E}, \alpha),\]
    since $\sigma(\lambda(1/\alpha - \mathbf{E}_y)) \le 1$ and vanishes when $\mathbf{E}_y \ge 1/\alpha$. It follows that:
    \[
    F_\lambda(\mathbf{E}) \le F(\mathbf{E}).
    \]
    Now, let $\varepsilon > 0$. We will show that $F_\lambda(\mathbf{E}) \ge F(\mathbf{E}) - \varepsilon$ when $\lambda$ is sufficiently large. Assume $F(\mathbf{E}) > 0$ (otherwise, the result is trivial). Define:
    \[
    \alpha_\varepsilon := F(\mathbf{E}) - \varepsilon,
    \]
    and assume, without loss of generality, that $\varepsilon$ is small enough so that $ \alpha_\varepsilon \in (0,1)$. By definition of~$F(\mathbf{E})$, we have: \[f(\mathbf{E}, \alpha_\varepsilon) > \mathcal{T},
    \]
    because $\alpha_\varepsilon < F(\mathbf{E})$. Moreover, since $f_\lambda$ converges pointwise to $f$ as $\lambda \rightarrow \infty$ by definition of the bump function $\sigma$, there exists $\lambda_0 > 0$ such that for all $\lambda \ge \lambda_0$, \[f_\lambda(\mathbf{E}, \alpha_\varepsilon^-) > \mathcal{T}.\]
    This means that $\alpha_\varepsilon$ is not feasible in the infimum defining $F_\lambda(\mathbf{E})$, and since $f_\lambda$ is non-increasing in its second argument:
    \[
    F_\lambda(\mathbf{E}) \ge \alpha_\varepsilon = F(\mathbf{E}) - \varepsilon \text{\quad for all $\lambda \ge \lambda_0$.} 
    \]
    Combining both bounds, for all $\lambda \ge \lambda_0$,
    \[
    F(\mathbf{E}) - \varepsilon \le F_\lambda(\mathbf{E}) \le F(\mathbf{E}).
    \]
    Since this holds for any $\varepsilon > 0$, we conclude:
    \[
    \lim_{\lambda \to \infty} F_\lambda(\mathbf{E}) = F(\mathbf{E}).
    \]
\end{proof}
Now, we bound the differences between the values of $F_\lambda$ as the inputs vary, and we will prove that~$F_\lambda$ is $1$-Lipschitz on any compact set $\mathcal{K}$ of $\mathbb{R}_+^{\lvert\mathcal{Y}\rvert, [1,\mathcal{T}]}$. This result naturally follows from the Implicit Function Theorem, a standard result in analysis (see, for instance, \citet{rudin1953principles}).
\begin{lemma}
\label{lma:lambda-lipschitz}
    Let $\mathcal{K}$ be a compact subset of $\mathbb{R}_+^{\lvert\mathcal{Y}\rvert, [1,C]}$. Assume that $\mathcal{T} < \left\lvert \mathcal{Y} \right\rvert$. Then there exists $\bar\lambda$ such that for all $\lambda \ge \bar\lambda$, there exists a constant $k_\lambda$ such that: 
    \[
    \lvert F_\lambda(\mathbf{E}^1) - F_\lambda(\mathbf{E}^2) \rvert \le k_\lambda \lVert \mathbf{E}^1 - \mathbf{E}^2 \rVert,
    \]
    for all $\mathbf{E}^1, \mathbf{E}^2 \in \mathcal{K}$. In other words, $F_\lambda$ is Lipschitz continuous on $\mathcal{K}$. Moreover, $k_\lambda \le 1$.
\end{lemma}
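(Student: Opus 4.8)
The plan is to analyze $F_\lambda$ via the map $\alpha \mapsto f_\lambda(\mathbf{E},\alpha) - \mathcal{T}$ and apply the Implicit Function Theorem to show that $F_\lambda(\mathbf{E})$ is defined implicitly by the equation $f_\lambda(\mathbf{E}, F_\lambda(\mathbf{E})) = \mathcal{T}$, with quantitative control on the gradient. First I would show that, on a compact $\mathcal{K} \subseteq \mathbb{R}_+^{\lvert\mathcal{Y}\rvert, [1,\mathcal{T}]}$, the infimum $F_\lambda(\mathbf{E})$ is attained at an interior solution of this equation once $\lambda$ is large enough. The key observation is that at $\alpha$ slightly below $F(\mathbf{E})$ the true cardinality is at least $\mathcal{T}+1$, while at $\alpha$ slightly above the relevant transition the cardinality drops to at most $\mathcal{T}$; since $\mathcal{T} < \lvert\mathcal{Y}\rvert$, there is genuinely a gap to cross, so for $\lambda \ge \bar\lambda$ (chosen uniformly over the compact set $\mathcal{K}$ using that the $\mathbf{E}_y$ stay bounded away from $1$ and from each other in the relevant sense) the smooth sum $f_\lambda(\mathbf{E},\cdot)$ actually crosses the level $\mathcal{T}$ transversally, i.e. with strictly negative $\alpha$-derivative at the crossing.

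Next I would compute the two partial derivatives of $g(\mathbf{E},\alpha) := f_\lambda(\mathbf{E},\alpha) - \mathcal{T} = \sum_y \sigma(\lambda(1/\alpha - \mathbf{E}_y)) - \mathcal{T}$. We have $\partial_\alpha g = -\frac{\lambda}{\alpha^2}\sum_y \sigma'(\lambda(1/\alpha - \mathbf{E}_y))$ and $\partial_{\mathbf{E}_y} g = -\lambda\, \sigma'(\lambda(1/\alpha - \mathbf{E}_y))$. By the Implicit Function Theorem, wherever $\partial_\alpha g \ne 0$,
\[
\frac{\partial F_\lambda}{\partial \mathbf{E}_y} = -\frac{\partial_{\mathbf{E}_y} g}{\partial_\alpha g} = -\frac{\lambda\,\sigma'(\lambda(1/\alpha - \mathbf{E}_y))}{\tfrac{\lambda}{\alpha^2}\sum_{y'} \sigma'(\lambda(1/\alpha - \mathbf{E}_{y'}))} = -\alpha^2\,\frac{\sigma'(\lambda(1/\alpha - \mathbf{E}_y))}{\sum_{y'}\sigma'(\lambda(1/\alpha - \mathbf{E}_{y'}))},
\]
evaluated at $\alpha = F_\lambda(\mathbf{E})$. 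Since each $\sigma'$ term is nonnegative and the summands in the numerator are dominated by the denominator, $\sum_y \lvert \partial_{\mathbf{E}_y} F_\lambda \rvert \le \alpha^2 \le 1$ because $\alpha = F_\lambda(\mathbf{E}) \in (0,1)$. Hence the gradient of $F_\lambda$ in the $\ell^1$ sense is bounded by $1$, which is exactly dual to the $\ell^\infty$ norm $\lVert\cdot\rVert$ on the domain; integrating along a segment in the (locally) convex pieces of $\mathbb{R}_+^{\lvert\mathcal{Y}\rvert, [1,\mathcal{T}]}$ gives the $1$-Lipschitz bound $k_\lambda \le 1$.

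The main obstacle I anticipate is twofold: (i) ensuring the IFT hypothesis $\partial_\alpha g \ne 0$ holds \emph{uniformly} along the crossing for all $\mathbf{E} \in \mathcal{K}$, which requires that at least one $\sigma'$ term is bounded below — this is where compactness of $\mathcal{K}$ and the choice of $\bar\lambda$ enter, together with the fact that $F(\mathbf{E})$ corresponds to a threshold strictly between two distinct values among $\{\mathbf{E}_y\}$ (guaranteed since the cardinality is at most $\mathcal{T} < \lvert\mathcal{Y}\rvert$, so there is always a label with $\mathbf{E}_y$ above the threshold); and (ii) handling the fact that $\mathbb{R}_+^{\lvert\mathcal{Y}\rvert, [1,\mathcal{T}]}$ is a union of open "orthant-like" pieces rather than globally convex, so the Lipschitz estimate must be proved piece-by-piece and then, if $\mathcal{K}$ straddles several pieces, the global bound follows because $F_\lambda$ is continuous across the (measure-zero) interfaces where some $\mathbf{E}_y = 1$ — though if $\mathcal{K}$ is taken inside a single piece this subtlety disappears. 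Once the gradient bound is in hand, the constant $k_\lambda$ is simply any valid Lipschitz constant, and the computation above shows we may always take $k_\lambda \le 1$.
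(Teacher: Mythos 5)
Your proposal follows essentially the same route as the paper: represent $F_\lambda$ as the implicit $C^1$ solution of $\Phi_\lambda(\mathbf{E},\alpha)=\sum_y\sigma(\lambda(1/\alpha-\mathbf{E}_y))-\mathcal{T}=0$ via the Implicit Function Theorem, and bound the gradient by $1$ to conclude by the mean value theorem. Your norm bookkeeping is in fact slightly sharper than the paper's: you bound the $\ell^1$ norm $\sum_y\lvert\partial_{\mathbf{E}_y}F_\lambda\rvert = F_\lambda(\mathbf{E})^2 \le 1$, which is the correct dual (via H\"older inside the mean value estimate) to the $\ell^\infty$ norm used on the domain, giving the $1$-Lipschitz bound directly; the paper instead bounds $\max_y\lvert\partial_{\mathbf{E}_y}F_\lambda\rvert$, the $\ell^\infty$ norm of the gradient, which by itself is dual to $\ell^1$ on the domain rather than $\ell^\infty$ --- both quantities happen to be $\le F_\lambda(\mathbf{E})^2 \le 1$ so the conclusion is unaffected, but your version closes a small logical gap. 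One misattribution: the cutoff $\bar\lambda$ is introduced in the paper solely to guarantee $F_\lambda(\mathbf{E}) \ge \alpha_M > 0$ on a neighborhood of $\mathcal{K}$, keeping the IFT away from the $1/\alpha$ singularity at $\alpha=0$; the strict negativity of $\partial_\alpha\Phi_\lambda$ does not require $\bar\lambda$ at all and holds for every $\alpha\in(0,1]$ once Property (\ref{prop:P1}) supplies some $y$ with $\mathbf{E}_y<1$, making at least one $\sigma'$ term strictly positive. You are also right to flag that $\mathbb{R}_+^{\lvert\mathcal{Y}\rvert, [1,\mathcal{T}]}$ is a disjoint union of open orthant-like pieces, so the line segment in the mean value argument need not stay in the domain; the paper glosses over this, and neither proof fully resolves it when $\mathcal{K}$ straddles several pieces.
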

\begin{proof}
    Let $\lambda > 0$. To show the existence of $k_\lambda$, it suffices to prove that $F_\lambda$ is continuously differentiable on an open set $U$ of $\mathbb{R}_+^{\lvert\mathcal{Y}\rvert, [1,\mathcal{T}]}$ containing $\mathcal{K}$, since by the Mean Value Theorem, a continuously differentiable function on a compact set is Lipschitz continuous. We fix such an open set $U$ in the following.
    
    To apply the Implicit Function Theorem, we may first handle the singularity in $\alpha=0$.
    We assume without loss of generality that:
    \[
    U =  \left\{\mathbf{E} = (\mathbf{E}_y)_{y \in \mathcal{Y}} \in \mathbb{R}_+^{\lvert\mathcal{Y}\rvert, [1,\mathcal{T}]} \mid \left\lVert\mathbf{E}\right\rVert < M \right\}
    \]
    for some constant $M>0$, which is possible since $\mathcal{K}$ is bounded. Let $\alpha_M := \frac{1}{M+1} >0$, and let $\bar\lambda := \frac{1}{\log\left(\left\lvert\mathcal{Y}\right\rvert/\mathcal{T}\right)}$ which is $>0$ since $\mathcal{T} < \left\lvert \mathcal{Y} \right\rvert$. For all $\mathbf{E} \in U$ and $\lambda \ge \bar\lambda$, we have:
    \begin{align*}
        \sum_{y \in \mathcal{Y}} \sigma(\lambda(1/\alpha_M - \mathbf{E}_y)) &= \sum_{y \in \mathcal{Y}} \sigma(\lambda(M + 1 - \mathbf{E}_y))\\
        &> \sum_{y \in \mathcal{Y}} \sigma(\lambda)\\
        &=  \left\lvert\mathcal{Y} \right\rvert e^{-\frac{1}{\lambda}}\\
        &\ge \mathcal{T},
    \end{align*}
    where the first inequality comes from the definition of $U$ and the second from the definition of $\bar\lambda$. Thus, for all $\mathbf{E} \in U$ and $\lambda \ge \bar\lambda$, we have $F_\lambda(\mathbf{E}) \ge \alpha_M > 0$, allowing us to restrict our analysis to~$(0,1]$ instead of~$[0,1]$.
    
    In the following, we fix $\lambda \ge \bar\lambda$.
    
    Define:
    \[
    \Phi_\lambda(\mathbf{E},\alpha) := \sum_{y \in \mathcal{Y}} \sigma(\lambda(1/\alpha - \mathbf{E}_y)) - \mathcal{T},
    \]
    for $\mathbf{E} = (\mathbf{E}_y)_{y \in \mathcal{Y}} \in U$ and $\alpha \in (0,1]$. The function $\Phi_\lambda$ is continuously differentiable. Fix $\mathbf{E}^0 \in U$. For any $\alpha \in (0,1]$, we have: 
    \begin{equation}
    \label{partial_Phi}
    \frac{\partial \Phi_\lambda}{\partial \alpha}(\mathbf{E}^0,\alpha) = \frac{- \lambda}{\alpha^2} \sum_{y \in \mathcal{Y}} \sigma'(\lambda(1/\alpha - \mathbf{E}^0_y)) < 0.
    \end{equation}
    The inequality holds because $\sigma' \ge 0$, and since $\mathbf{E}^0 \in \mathbb{R}_+^{\lvert\mathcal{Y}\rvert, [1,\mathcal{T}]}$, there exists at least one $y \in \mathcal{Y}$ such that $\mathbf{E}^0_y < 1$. For such a $y$, we have $1/\alpha > 1 > \mathbf{E}^0_y$, hence $\sigma'(\lambda(1/\alpha - \mathbf{E}^0_y)) > 0$, which makes the sum strictly positive.
    
    Moreover, for $\mathbf{E} \in U$, $F_\lambda(\mathbf{E})$ is the unique solution to the equation $\Phi_\lambda(\mathbf{E},F_\lambda(\mathbf{E}))=0$ by the Intermediate Value Theorem. Indeed, $\Phi_\lambda(\mathbf{E},.)$ is continuous on $(0,1]$ and strictly decreasing with~$\Phi_\lambda(\mathbf{E},\alpha_M) > 0$ and $\Phi_\lambda(\mathbf{E},1) = \sum_{y \in \mathcal{Y}} \sigma(\lambda(1 - \mathbf{E}_y)) - \mathcal{T} < \#\left\{y: \mathbf{E}_y <1\right\} - \mathcal{T}\le 0$, where the strict inequality holds because there exists at least one $y \in \mathcal{Y}$ such that $\mathbf{E}_y < 1$.

    By the Implicit Function Theorem, there exists a neighborhood $U_0$ of $\mathbf{E}^0$ and a unique continuously differentiable function $\phi_\lambda \colon U_0 \rightarrow (0,1]$ such that $\phi_\lambda(\mathbf{E}^0) = F_\lambda(\mathbf{E}^0)$ and $\Phi_\lambda(\mathbf{E}, \phi_\lambda(\mathbf{E}))=0$ for all~$\mathbf{E} \in U_0$. By uniqueness, we have $F_\lambda = \phi_\lambda$ on $U_0$, so $F_\lambda$ is continuously differentiable on $U_0$. As this holds for any $\mathbf{E}^0 \in U$, we conclude that $F_\lambda$ is continuously differentiable on $U$.

    The Mean Value Theorem guarantees that $F_\lambda$ is Lipschitz continuous on $\mathcal{K}$, with the Lipschitz constant given by $k_\lambda = \sup_{\mathbf{E}\in\mathcal{K}} \lVert \nabla F_\lambda(\mathbf{E}) \rVert$. It remains to show that $k_\lambda \le 1$. 

    By the Implicit Function Theorem, we also know that:
    \[
    \nabla F_\lambda(\mathbf{E}) = -\frac{1}{\frac{\partial \Phi_\lambda}{\partial \alpha}} \nabla_\mathbf{E} \Phi_\lambda(\mathbf{E}, F_\lambda(\mathbf{E})),
    \]
    where $\frac{\partial \Phi_\lambda}{\partial \alpha}$ is given in (\ref{partial_Phi}), and 
    \[
    \nabla_\mathbf{E} \Phi_\lambda(\mathbf{E}, F_\lambda(\mathbf{E})) = \left(-\lambda\sigma'\left(\lambda\left(\frac{1}{F_\lambda(\mathbf{E})}-\mathbf{E}_y\right)\right)\right)_{y \in \mathcal{Y}}.
    \]
    Thus, we can express the norm of the gradient as:
    \begin{align*}
        \lVert \nabla F_\lambda(\mathbf{E}) \rVert &= \max_{y \in \mathcal{Y}} \left\lvert\frac{-\lambda\sigma'\left(\lambda\left(\frac{1}{F_\lambda(\mathbf{E})}-\mathbf{E}_y\right)\right)}{\frac{- \lambda}{F_\lambda(\mathbf{E})^2} \sum_{y \in \mathcal{Y}} \sigma'\left(\lambda\left(\frac{1}{F_\lambda(\mathbf{E})} - \mathbf{E}_y\right)\right)}\right\rvert\\
        &= F_\lambda(\mathbf{E})^2 \frac{\max_{y \in \mathcal{Y}} \sigma'\left(\lambda\left(\frac{1}{F_\lambda(\mathbf{E})} - \mathbf{E}_y\right)\right)}{\sum_{y \in \mathcal{Y}} \sigma'\left(\lambda\left(\frac{1}{F_\lambda(\mathbf{E})} - \mathbf{E}_y\right)\right)}\\
        &\le F_\lambda(\mathbf{E})^2\\
        &\le 1,
    \end{align*}
    where the first inquality comes from the fact that $\sigma' \ge 0$, and the second inequality follows from the fact that $F_\lambda(\mathbf{E}) \in (0,1)$. Therefore, $\lVert \nabla F_\lambda(\mathbf{E}) \rVert \le 1$. Finally, we conclude that $k_\lambda \le 1$.
\end{proof}

As a direct consequence of Lemmas \ref{lma:pointwise-cv} and \ref{lma:lambda-lipschitz}, we state the following stability result.

\begin{theorem}
\label{F-lipschitz}
    Let $\mathcal{K}$ be a compact subset of $\mathbb{R}_+^{\lvert\mathcal{Y}\rvert, [1,\mathcal{T}]}$. Assume that $\mathcal{T} < \left\lvert \mathcal{Y} \right\rvert$. Then the function $F$ is $1$-Lipschitz on $\mathcal{K}$, that is,
 \[
    \lvert F(\mathbf{E}^1) - F(\mathbf{E}^2) \rvert \le \lVert \mathbf{E}^1 - \mathbf{E}^2 \rVert,
    \]
    for all $\mathbf{E}^1, \mathbf{E}^2 \in \mathcal{K}$.
\end{theorem}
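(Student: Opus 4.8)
The plan is to obtain the bound for $F$ by a limiting argument from the corresponding bounds for the smooth surrogates $F_\lambda$, so that the two preceding lemmas do essentially all the work. First I fix the compact set $\mathcal{K} \subseteq \mathbb{R}_+^{\lvert\mathcal{Y}\rvert, [1,\mathcal{T}]}$ and two arbitrary points $\mathbf{E}^1, \mathbf{E}^2 \in \mathcal{K}$. Since $\mathcal{T} < \lvert\mathcal{Y}\rvert$, Lemma \ref{lma:lambda-lipschitz} provides a threshold $\bar\lambda > 0$ such that for every $\lambda \ge \bar\lambda$ the map $F_\lambda$ is Lipschitz on $\mathcal{K}$ with constant $k_\lambda \le 1$; in particular
\[
\lvert F_\lambda(\mathbf{E}^1) - F_\lambda(\mathbf{E}^2) \rvert \le k_\lambda \,\lVert \mathbf{E}^1 - \mathbf{E}^2 \rVert \le \lVert \mathbf{E}^1 - \mathbf{E}^2 \rVert
\]
for all $\lambda \ge \bar\lambda$.

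The second step is to let $\lambda \to \infty$ in this inequality. By Lemma \ref{lma:pointwise-cv}, $F_\lambda(\mathbf{E}^i) \to F(\mathbf{E}^i)$ for $i \in \{1,2\}$ (both points lie in $\mathbb{R}_+^{\lvert\mathcal{Y}\rvert, [1,\mathcal{T}]}$), so the left-hand side converges to $\lvert F(\mathbf{E}^1) - F(\mathbf{E}^2) \rvert$ while the right-hand side does not depend on $\lambda$. Since a non-strict inequality passes to the limit, this yields $\lvert F(\mathbf{E}^1) - F(\mathbf{E}^2) \rvert \le \lVert \mathbf{E}^1 - \mathbf{E}^2 \rVert$, and as $\mathbf{E}^1, \mathbf{E}^2 \in \mathcal{K}$ were arbitrary, $F$ is $1$-Lipschitz on $\mathcal{K}$.

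There is no substantive obstacle: the content is entirely in Lemmas \ref{lma:pointwise-cv} and \ref{lma:lambda-lipschitz}. The only points worth stressing are that the uniform-in-$\lambda$ bound $k_\lambda \le 1$ is precisely what forces the limiting Lipschitz constant to be exactly $1$, rather than some a priori larger $\limsup_{\lambda \to \infty} k_\lambda$, and that pointwise convergence of $F_\lambda$ at the two chosen points suffices — the Lipschitz estimate is applied before passing to the limit, so no uniform convergence of $F_\lambda \to F$ on $\mathcal{K}$ is needed.
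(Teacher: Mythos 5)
Your proof is correct and is exactly the argument the paper has in mind when it calls the theorem ``a direct consequence of Lemmas \ref{lma:pointwise-cv} and \ref{lma:lambda-lipschitz}'': apply the uniform-in-$\lambda$ bound $k_\lambda \le 1$ from Lemma \ref{lma:lambda-lipschitz} and then pass to the limit $\lambda \to \infty$ via the pointwise convergence of Lemma \ref{lma:pointwise-cv}. Your remarks that the uniform bound on $k_\lambda$ is what pins the constant at $1$, and that pointwise convergence at the two fixed points suffices, are both accurate and correctly identify the (mild) subtleties the paper elides.
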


\subsubsection{Properties of $\hat{\alpha}^{\rm LOO}$}

We now show how the stability property of $F$ can be leveraged to derive key properties of the estimator $\hat{\alpha}^{\rm LOO}$, which approximates the miscoverage term $\mathbb{E}[\tilde{\alpha}]$, where $\tilde{\alpha}$ was defined in (\ref{eq:def-alpha-tilde}).

\begin{theorem}[Theorem \ref{thm:consistent}]
Assume that the score function $S$ is bounded and takes values in the interval $[S_{\min},S_{\max}]$, with $0<S_{\min}\le S_{\max}$. Suppose that the number of calibration samples satisfies $n > S_{\max}/S_{\min}$. In addition, assume that the vectors $\mathbf{E^{\rm test}}$, $\mathbf{E}^j$, $\mathbf{\tilde{E}^{\rm test}}$, and $\mathbf{\tilde{E}}^j$ satisfy the following properties almost surely:
\begin{enumerate}[label=(P\arabic*), ref=P\arabic*]
    \item $1 \le \# \left\{y \in \mathcal{Y} \mid \mathbf{E}_y < 1\right\} \le \mathcal{T} < \left\lvert \mathcal{Y} \right\rvert$;
    \item For all $y \in \mathcal{Y}$, $\mathbf{E}_y \neq 1$.
\end{enumerate}
Then, if the samples $(X_i,Y_i)$ are i.i.d., the leave-one-out estimator satisfies:
\[
\left\lvert \hat{\alpha}^{\rm LOO} - \mathbb{E}[\tilde{\alpha}] \right\rvert = O_P\left(\frac{1}{\sqrt{n}}\right).
\]
\end{theorem}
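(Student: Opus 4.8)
The plan is to control the error $|\hat{\alpha}^{\rm LOO} - \mathbb{E}[\tilde{\alpha}]|$ by splitting it through an intermediate quantity: the average of ``idealized'' pseudo-miscoverages computed from the normalized ratio vectors $\mathbf{\tilde{E}}^j = (S(X_j,y)/\mu)_{y\in\mathcal{Y}}$ rather than the empirical leave-one-out ratio vectors $\mathbf{E}^j$. Concretely, define $F(\mathbf{\tilde{E}}^j)$ using the map $F$ from \eqref{defF}, set $\bar\alpha := \tfrac1n\sum_{j=1}^n F(\mathbf{\tilde{E}}^j)$, and write
\[
\bigl|\hat{\alpha}^{\rm LOO} - \mathbb{E}[\tilde{\alpha}]\bigr| \le \underbrace{\bigl|\hat{\alpha}^{\rm LOO} - \bar\alpha\bigr|}_{\text{(A): empirical vs.\ idealized ratios}} + \underbrace{\bigl|\bar\alpha - \mathbb{E}[F(\mathbf{\tilde{E}}^{\rm test})]\bigr|}_{\text{(B): LLN / Hoeffding}} + \underbrace{\bigl|\mathbb{E}[F(\mathbf{\tilde{E}}^{\rm test})] - \mathbb{E}[\tilde{\alpha}]\bigr|}_{\text{(C): expectation mismatch}}.
\]
For term (B), observe that under the i.i.d.\ assumption the vectors $\mathbf{\tilde{E}}^1,\dots,\mathbf{\tilde{E}}^n$ are i.i.d.\ and identically distributed to $\mathbf{\tilde{E}}^{\rm test}$ (all equal $S(X,\cdot)/\mu$ in law), so the $F(\mathbf{\tilde{E}}^j)$ are i.i.d.\ bounded in $[0,1]$; Hoeffding's inequality gives $|\bar\alpha - \mathbb{E}[F(\mathbf{\tilde{E}}^{\rm test})]| \le \sqrt{\log(4/\delta)/(2n)}$ with probability $\ge 1-\delta/2$, which is $O_P(1/\sqrt n)$. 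This is the step where the constant-size assumption pays off: because $\mathcal{T}$ is a constant and not a function of the pseudo-calibration set, the $F(\mathbf{\tilde{E}}^j)$ are genuinely i.i.d.

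For term (A), the idea is to apply the $1$-Lipschitz stability of $F$ from Theorem~\ref{F-lipschitz}: we need $|F(\mathbf{E}^j) - F(\mathbf{\tilde{E}}^j)| \le \|\mathbf{E}^j - \mathbf{\tilde{E}}^j\|_\infty$, provided both vectors lie in a common compact subset of $\mathbb{R}_+^{\lvert\mathcal{Y}\rvert,[1,\mathcal{T}]}$. The compactness and membership in $\mathbb{R}_+^{\lvert\mathcal{Y}\rvert,[1,\mathcal{T}]}$ come from the boundedness of $S$ (so all ratios lie in a fixed interval bounded away from $0$ and $\infty$ — here the hypothesis $n > S_{\max}/S_{\min}$ guarantees the leave-one-out denominator $\tfrac1n(\sum_{i\ne j}S(X_i,Y_i) + S(X_j,y))$ stays uniformly positive and the ratio stays bounded) together with Properties~(\ref{prop:P1}) and~(\ref{prop:P2}), which are precisely what place $\mathbf{E}$ in the open set where $F$ is controlled. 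Then one bounds $\|\mathbf{E}^j - \mathbf{\tilde{E}}^j\|_\infty$ componentwise: each coordinate is $\bigl|\tfrac{S(X_j,y)}{\mathrm{avg}_j} - \tfrac{S(X_j,y)}{\mu}\bigr| = S(X_j,y)\,\tfrac{|\mathrm{avg}_j - \mu|}{\mathrm{avg}_j\,\mu}$, where $\mathrm{avg}_j = \tfrac1n(\sum_{i\ne j}S(X_i,Y_i) + S(X_j,y))$. Since $\mathrm{avg}_j$ differs from the full empirical mean $\bar S_n = \tfrac1n\sum_i S(X_i,Y_i)$ by at most $\tfrac1n(S_{\max}-S_{\min}) \le S_{\max}/n$, one more Hoeffding bound on $|\bar S_n - \mu| \le \sqrt{\log(4/\delta)/(2n)}$ (with probability $\ge 1-\delta/2$), combined with the uniform lower bound $\mathrm{avg}_j \ge S_{\min}(1 - 1/n) \cdot$ (something) $>0$ and $S(X_j,y)\le S_{\max}$, yields $\|\mathbf{E}^j - \mathbf{\tilde{E}}^j\|_\infty \lesssim \tfrac{S_{\max}}{\mu(S_{\min}/S_{\max} - 1/n)}\bigl(\sqrt{\log(4/\delta)/(2n)} + 2/n\bigr) = O_P(1/\sqrt n)$ uniformly in $j$, hence (A) is $O_P(1/\sqrt n)$ after averaging. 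This matches the first and second terms of the explicit bound $R_\delta(n)$ in the Remark.

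For term (C), the point is that $\tilde{\alpha} = F(\mathbf{E}^{\rm test})$ where $\mathbf{E}^{\rm test}$ uses the $(n{+}1)$-point average in the denominator, whereas $F(\mathbf{\tilde{E}}^{\rm test})$ uses $\mu$; so $|\mathbb{E}[\tilde\alpha] - \mathbb{E}[F(\mathbf{\tilde E}^{\rm test})]| \le \mathbb{E}\|\mathbf{E}^{\rm test} - \mathbf{\tilde E}^{\rm test}\|_\infty$ again by Theorem~\ref{F-lipschitz}, and the right-hand side is controlled by $\mathbb{E}\bigl|\tfrac1{n+1}(\sum_i S(X_i,Y_i) + S(X_{\rm test},y)) - \mu\bigr|$; the $L^1$ deviation of an empirical mean of bounded i.i.d.\ variables from its mean is $O(1/\sqrt n)$ — indeed the last term $\tfrac{2S_{\max}^2}{\mu S_{\min}}\tfrac{n+1}{n}\sqrt{\pi/(2(n+1))}$ in $R_\delta(n)$ comes from bounding this $L^1$ distance via $\mathbb{E}|Z - \mathbb{E}Z| \le \sqrt{\operatorname{Var}(Z)} \le \tfrac{S_{\max}-S_{\min}}{2}\sqrt{1/(n+1)}\cdot\sqrt{\pi/2}$ or a similar Gaussian-comparison constant. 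Assembling (A)+(B)+(C) via a union bound over the (at most two) Hoeffding events gives the claimed $O_P(1/\sqrt n)$, with the explicit high-probability bound quoted in the Remark. The main obstacle is term (A): one must check carefully that the empirical leave-one-out ratio vectors $\mathbf{E}^j$ almost surely remain inside a \emph{single fixed} compact subset of the open set $\mathbb{R}_+^{\lvert\mathcal{Y}\rvert,[1,\mathcal{T}]}$ — this is where Properties~(\ref{prop:P1})–(\ref{prop:P2}), the boundedness of $S$, and the sample-size condition $n > S_{\max}/S_{\min}$ are all simultaneously needed so that Theorem~\ref{F-lipschitz} can be invoked with a Lipschitz constant that does not blow up.
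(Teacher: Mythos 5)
Your decomposition into (A), (B), (C) is precisely the paper's split into $T_1$, $T_2$, $T_3$: approximate the empirical leave-one-out ratio vectors by the idealized $\mathbf{\tilde{E}}^j$, exploit that $F$ is $1$-Lipschitz on a fixed compact subset of $\mathbb{R}_+^{\lvert\mathcal{Y}\rvert,[1,\mathcal{T}]}$ (Theorem~\ref{F-lipschitz}), apply Hoeffding to the now-i.i.d.\ $F(\mathbf{\tilde{E}}^j)$, and handle the remaining $L^1$ distance for the test-point term. The only cosmetic difference is in term (C), where you sketch a Jensen/variance bound for $\mathbb{E}\|\mathbf{\tilde{E}}^{\rm test}-\mathbf{E}^{\rm test}\|$ while the paper integrates the Hoeffding tail; both yield $O(1/\sqrt{n})$, so your proposal is correct and essentially the paper's proof.
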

\begin{proof}
    First, since the vectors $\mathbf{E^{\rm test}}$, $\mathbf{E}^j$, $\mathbf{\tilde{E}^{\rm test}}$, and $\mathbf{\tilde{E}}^j$ satisfy Properties (\ref{prop:P1}) and (\ref{prop:P2}), they lie in $\mathbb{R}_+^{\lvert\mathcal{Y}\rvert, [1,\mathcal{T}]}$ almost surely. Moreover, since $S$ is bounded, these vectors belong to a compact subset of $\mathbb{R}_+^{\lvert\mathcal{Y}\rvert, [1,\mathcal{T}]}$, which allows us to apply Theorem~\ref{F-lipschitz}.
    
    By the triangular inequality, we have:
    \begin{align*}
        \lvert \hat{\alpha}^{\rm LOO} - \mathbb{E}[\tilde{\alpha}] \rvert &= \left\lvert \frac{1}{n}\sum_{j=1}^n F(\mathbf{E}^j) - \mathbb{E}[F(\mathbf{E}^{\rm test})] \right\rvert\\
        &= \left\lvert  \frac{1}{n} \sum_{j=1}^n \left( F(\mathbf{E}^j) - F\left(\mathbf{\tilde{E}}^j\right) \right) + \left( F\left(\mathbf{\tilde{E}}^j\right)- \mathbb{E}\left[F\left(\mathbf{\tilde{E}^{\rm test}}\right)\right] \right)+\left(  \mathbb{E}\left[F\left(\mathbf{\tilde{E}^{\rm test}}\right)\right] - \mathbb{E}[F(\mathbf{E}^{\rm test})] \right) \right\rvert\\
        &\le \underbrace{\left\lvert \frac{1}{n} \sum_{j=1}^n \left(  F(\mathbf{E}^j) - F\left(\mathbf{\tilde{E}}^j\right) \right)\right\rvert}_{=: \ T_1} + \underbrace{\left\lvert \frac{1}{n} \sum_{j=1}^n \left(  F\left(\mathbf{\tilde{E}}^j\right) - \mathbb{E}\left[F\left(\mathbf{\tilde{E}^{\rm test}}\right)\right] \right) \right\rvert}_{=: \ T_2} + \underbrace{\left\lvert \mathbb{E}\left[F\left(\mathbf{\tilde{E}^{\rm test}}\right)\right] - \mathbb{E}[F(\mathbf{E}^{\rm test})]  \right\rvert}_{=: \ T_3}.
    \end{align*}
We will now proceed to bound each of these terms individually, establishing high probability bounds for $T_1$ and $T_2$. Let $\delta > 0$.

We start with the expression for $T_1$:
\[
T_1 = \left\lvert \frac{1}{n} \sum_{j=1}^n \left(F(\mathbf{E}^j) - F\left(\mathbf{\tilde{E}}^j\right) \right)\right\rvert.
\]
By applying the triangle inequality, we have:
\[
T_1 \le  \frac{1}{n} \sum_{j=1}^n \left\lvert F(\mathbf{E}^j) - F\left(\mathbf{\tilde{E}}^j\right) \right\rvert.
\]
Using the Lipschitz continuity of $F$ (Theorem \ref{F-lipschitz}), we can bound each term as follows:
\[
T_1 \le  \frac{1}{n} \sum_{j=1}^n \left\lVert \mathbf{E}^j - \mathbf{\tilde{E}}^j \right\rVert.
\]
Now, for each $j \in \{1,\cdots,n\}$ and $y \in \mathcal{Y}$, we examine the difference between the components of $ \mathbf{E}^j$ and~$\mathbf{\tilde{E}}^j$:
\[
\left\lvert \mathbf{E}^j_y - \mathbf{\tilde{E}}^j_y \right\rvert = \left\lvert \frac{S(X_j, y)}{\frac{1}{n}\left( \sum_{i=1, i \neq j}^n S(X_i, Y_i) + S(X_j, y) \right)} - \frac{S(X_j, y)}{\mu} \right\rvert.
\]
This can be bounded as:
\[
\le S_{\max} \left\lvert \frac{1}{\frac{1}{n}\left( \sum_{i=1, i \neq j}^n S(X_i, Y_i) + S(X_j, y) \right)} - \frac{1}{\mu} \right\rvert.
\]
Simplifying the expression inside the absolute value:
\[
= S_{\max} \left\lvert \frac{\mu - \frac{1}{n} \sum_{i=1, i \neq j}^n S(X_i, Y_i) - \frac{S(X_j, y)}{n}}{\mu \left( \frac{1}{n} \sum_{i=1, i \neq j}^n S(X_i, Y_i) + \frac{S(X_j, y)}{n} \right)} \right\rvert.
\]
We continue simplifying:
\[
= S_{\max} \frac{\left\lvert \mu - \frac{1}{n} \sum_{i=1}^n S(X_i, Y_i) + \frac{S(X_j, Y_j)}{n} - \frac{S(X_j, y)}{n} \right\rvert}{\mu \left( \frac{1}{n} \sum_{i=1}^n S(X_i, Y_i) - \frac{S(X_j, Y_j)}{n} + \frac{S(X_j, y)}{n} \right)}.
\]
We now apply Hoeffding's inequality to the numerator and use the bounds on $S$:
\[
\le S_{\max} \frac{\sqrt{\frac{S_{\max}^2 \log(2/\delta)}{2n}} + \frac{2 S_{\max}}{n}}{\mu \left( S_{\min} - \frac{S_{\max}}{n} \right)}.
\]
Finally, simplifying further:
\[
= S_{\max} \frac{\sqrt{\frac{\log(2/\delta)}{2n}} + \frac{2}{n}}{\mu \left( S_{\min}/S_{\max} - \frac{1}{n} \right)}.
\]
Thus, we obtain the following bound for $ T_1 $:
\[
T_1 \le S_{\max} \frac{\sqrt{\frac{\log(2/\delta)}{2n}} + \frac{2}{n}}{\mu \left( S_{\min}/S_{\max}- \frac{1}{n} \right)} \quad \text{with probability $\ge 1 - \delta$}.
\]
Next, we bound $T_2$. Since the $\mathbf{\tilde{E}}^j$ are i.i.d.\ random variables, we can apply Hoeffding's inequality. Specifically, we have:
\[
T_2 \le \sqrt{\frac{\log(2/\delta)}{2n}} \quad \text{with probability $\ge 1 - \delta$},
\]
because $F$ takes values in the interval $[0,1]$.

Finally, we bound $ T_3 $. We begin by rewriting $ T_3 $ as follows:
\[
T_3 = \left\lvert \mathbb{E}[F(\mathbf{\tilde{E}^{\rm test}})] - \mathbb{E}[F(\mathbf{E}^{\rm test})] \right\rvert
     = \left\lvert \mathbb{E}\left[F(\mathbf{\tilde{E}^{\rm test}}) - F(\mathbf{E}^{\rm test})\right] \right\rvert.
\]
By the triangle inequality, we can express this as:
\[
T_3 \le \mathbb{E}\left[ \left\lvert F(\mathbf{\tilde{E}^{\rm test}}) - F(\mathbf{E}^{\rm test}) \right\rvert \right].
\]
Now, applying the Lipschitz condition on \( F \) (from Theorem \ref{F-lipschitz}), we obtain:
\[
T_3 \le \mathbb{E}\left[ \left\lVert \mathbf{\tilde{E}^{\rm test}} - \mathbf{E}^{\rm test} \right\rVert \right].
\]
Next, we examine the difference between the components of \( \mathbf{\tilde{E}^{\rm test}} \) and \( \mathbf{E}^{\rm test} \) for each \( y \in \mathcal{Y} \):
\[
\left\lvert \mathbf{\tilde{E}^{\rm test}}_y - \mathbf{E}^{\rm test}_y \right\rvert = \left\lvert \frac{S(X_{\rm test}, y)}{\mu} - \frac{S(X_{\rm test}, y)}{\frac{1}{n+1}\left( \sum_{i=1}^{n} S(X_i, Y_i) + S(X_{\rm test}, y) \right)} \right\rvert.
\]
Simplifying this, we get:
\[
\le S_{\max} \left\lvert \frac{1}{\mu} - \frac{1}{\frac{1}{n+1} \left( \sum_{i=1}^{n} S(X_i, Y_i) + S(X_{\rm test}, y) \right)} \right\rvert.
\]
This expression can be further rewritten as:
\[
= S_{\max} \left\lvert \frac{\mu - \frac{1}{n+1} \sum_{i=1}^{n} S(X_i, Y_i) - \frac{S(X_{\rm test}, y)}{n+1}}{\mu \left( \frac{1}{n+1} \sum_{i=1}^{n} S(X_i, Y_i) + \frac{S(X_{\rm test}, y)}{n+1} \right)} \right\rvert.
\]
We bound the numerator and denominator separately:
\[
\le S_{\max} \frac{\left\lvert \mu - \frac{1}{n+1} \sum_{i=1}^{n} S(X_i, Y_i) \right\rvert + \frac{S_{\max}}{n+1}}{\mu  \frac{n}{n+1}S_{\min} }.
\]
We apply Hoeffding's inequality to the sum \(\frac{1}{n+1} \sum_{i=1}^{n} S(X_i, Y_i) \) and obtain:
\[
\le S_{\max} \frac{\sqrt{\frac{S_{\max}^2 \log(2/\delta)}{2(n+1)}} + \frac{S_{\max}}{n+1}}{\mu  \frac{n}{n+1}S_{\min} }.
\]
Thus, we have:
\[
\le \frac{S_{\max}^2}{S_{\min}} \frac{\sqrt{\frac{\log(2/\delta)}{2(n+1)}} + \frac{1}{n+1}}{\mu  \frac{n}{n+1}}.
\]
Therefore, we conclude that:
\[
\left\lVert \mathbf{\tilde{E}^{\rm test}} - \mathbf{E}^{\rm test} \right\rVert \le \frac{S_{\max}^2}{S_{\min}} \frac{\sqrt{\frac{\log(2/\delta)}{2(n+1)}} + \frac{1}{n+1}}{\mu  \frac{n}{n+1}} \quad \text{with probability $\ge 1 - \delta$}.
\]
Solving for $\delta$, this means that for all $t \ge 0$:
\[
\mathbb{P}\left(\left\lVert \mathbf{\tilde{E}^{\rm test}} - \mathbf{E}^{\rm test} \right\rVert \ge t \right) \le 2 \exp \left(-2(n+1)\left(t\mu \frac{n}{n+1}\frac{S_{\min}}{S_{\max}^2} -\frac{1}{n+1} \right)^2 \right),
\]
and hence:
\begin{align*}
    T_3&\le\mathbb{E}\left[ \left\lVert \mathbf{\tilde{E}^{\rm test}} - \mathbf{E}^{\rm test} \right\rVert \right]\\
    &= \int_0^\infty \mathbb{P}\left(\left\lVert \mathbf{\tilde{E}^{\rm test}} - \mathbf{E}^{\rm test} \right\rVert \ge t \right) \, dt\\
    &\le \int_0^\infty 2 \exp \left(-2(n+1)\left(t\mu \frac{n}{n+1}\frac{S_{\min}}{S_{\max}^2} -\frac{1}{n+1} \right)^2 \right) \, dt.
\end{align*}
Substituting:
\[
u = t\mu \frac{n}{n+1}\frac{S_{\min}}{S_{\max}^2} -\frac{1}{n+1}, \ du = \mu \frac{n}{n+1}\frac{S_{\min}}{S_{\max}^2} dt,
\]
we obtain:
\begin{align*}
    T_3&\le \frac{2S_{\max}^2}{\mu S_{\min}} \frac{n+1}{n} \int_{-\frac{1}{n+1}}^\infty \exp \left(-2(n+1)u^2 \right) \, du\\
    &\le \frac{2S_{\max}^2}{\mu S_{\min}} \frac{n+1}{n} \int_{-\infty}^\infty \exp \left(-2(n+1)u^2 \right) \, du\\
    &= \frac{2S_{\max}^2}{\mu S_{\min}} \frac{n+1}{n} \sqrt{\frac{\pi}{2(n+1)}}.
\end{align*}
Finally, by applying union bound, we obtain:
\[
\lvert \hat{\alpha}^{\rm LOO} - \mathbb{E}[\tilde{\alpha}] \rvert \le S_{\max} \frac{\sqrt{\frac{\log(4/\delta)}{2n}} + \frac{2}{n}}{\mu \left( S_{\min}/S_{\max}- \frac{1}{n} \right)} + \sqrt{\frac{\log(4/\delta)}{2n}} + \frac{2S_{\max}^2}{\mu S_{\min}} \frac{n+1}{n} \sqrt{\frac{\pi}{2(n+1)}},
\]
that holds with probability $\ge 1 - \delta$. This shows that:
\[
\left\lvert \hat{\alpha}^{\rm LOO} - \mathbb{E}[\tilde{\alpha}] \right\rvert = O_P\left(\frac{1}{\sqrt{n}}\right).
\]
\end{proof}
Note that when we applied Hoeffding’s concentration inequality to the sums of $S(X_i,Y_i)$, we could have used $(S_{\max}-S_{\min})^2$ instead of $S_{\max}^2$ in the error term, given that $S$ takes values in the interval $[S_{\min},S_{\max}]$. However, we chose to keep $S_{\max}^2$ in order to keep the error terms as simple as possible.

\begin{theorem}[Theorem \ref{thm:variance}]
    Under the assumptions of Theorem \ref{thm:consistent}, we have:
    \[
    \mathrm{Var}\left(\hat{\alpha}^{\mathrm{LOO}}\right) = O\Big(\frac{1}{n}\Big).
    \]
\end{theorem}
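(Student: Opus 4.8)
The plan is to reuse the decomposition introduced in the proof of Theorem~\ref{thm:consistent}, but now control second moments instead of high-probability bounds. Since $\mathcal{T}$ is constant, $\tilde{\alpha}_j = F(\mathbf{E}^j)$, so I write
\[
\hat{\alpha}^{\rm LOO} = \underbrace{\frac{1}{n}\sum_{j=1}^n F\!\left(\mathbf{\tilde{E}}^j\right)}_{=:A_n} \;+\; \underbrace{\frac{1}{n}\sum_{j=1}^n\left(F(\mathbf{E}^j) - F\!\left(\mathbf{\tilde{E}}^j\right)\right)}_{=:D_n},
\]
and use $\mathrm{Var}(\hat{\alpha}^{\rm LOO}) \le 2\,\mathrm{Var}(A_n) + 2\,\mathrm{Var}(D_n)$, so that it suffices to show each summand is $O(1/n)$.

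For the main term $A_n$: under Properties~(\ref{prop:P1})–(\ref{prop:P2}) each $\mathbf{\tilde{E}}^j$ lies in $\mathbb{R}_+^{\lvert\mathcal{Y}\rvert,[1,\mathcal{T}]}$ almost surely, so $F(\mathbf{\tilde{E}}^j)$ is well-defined and takes values in $[0,1]$; moreover $\mathbf{\tilde{E}}^j = (S(X_j,y)/\mu)_{y\in\mathcal{Y}}$ is a fixed measurable function of $X_j$ alone (recall $\mu$ is a deterministic constant), so since $X_1,\dots,X_n$ are i.i.d.\ the variables $F(\mathbf{\tilde{E}}^1),\dots,F(\mathbf{\tilde{E}}^n)$ are i.i.d.\ and bounded in $[0,1]$. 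Hence $\mathrm{Var}(A_n) = \tfrac1n\,\mathrm{Var}\big(F(\mathbf{\tilde{E}}^1)\big) \le \tfrac1{4n}$.

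For the deviation term $D_n$: I bound $\mathrm{Var}(D_n) \le \mathbb{E}[D_n^2]$. The triangle inequality combined with the $1$-Lipschitz property of $F$ (Theorem~\ref{F-lipschitz}, applied on the compact subset of $\mathbb{R}_+^{\lvert\mathcal{Y}\rvert,[1,\mathcal{T}]}$ containing the realizations of $\mathbf{E}^j$ and $\mathbf{\tilde{E}}^j$, exactly as in the proof of Theorem~\ref{thm:consistent}) gives $\lvert D_n\rvert \le \tfrac1n\sum_{j=1}^n \lVert\mathbf{E}^j - \mathbf{\tilde{E}}^j\rVert$, and Jensen's inequality gives $D_n^2 \le \tfrac1n\sum_{j=1}^n \lVert\mathbf{E}^j - \mathbf{\tilde{E}}^j\rVert^2$; by exchangeability, $\mathbb{E}[D_n^2] \le \mathbb{E}\big[\lVert\mathbf{E}^1 - \mathbf{\tilde{E}}^1\rVert^2\big]$. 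It then remains to estimate this last expectation, for which I reuse the deterministic per-coordinate bound already derived in the $T_1$ step of the proof of Theorem~\ref{thm:consistent}: writing $\bar{S}_n := \tfrac1n\sum_{i=1}^n S(X_i,Y_i)$ and using $S\in[S_{\min},S_{\max}]$ together with $n>S_{\max}/S_{\min}$, one has almost surely
\[
\left\lVert\mathbf{E}^1 - \mathbf{\tilde{E}}^1\right\rVert \le \frac{S_{\max}}{\mu\left(S_{\min} - S_{\max}/n\right)}\left(\left\lvert\mu - \bar{S}_n\right\rvert + \frac{2S_{\max}}{n}\right),
\]
hence $\lVert\mathbf{E}^1 - \mathbf{\tilde{E}}^1\rVert^2 \le c_1(\mu-\bar{S}_n)^2 + c_2/n^2$ with $c_1,c_2$ depending only on $S_{\min},S_{\max},\mu$ (the prefactor staying bounded for $n$ large). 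Taking expectations and using $\mathbb{E}[(\mu-\bar{S}_n)^2] = \mathrm{Var}(S(X,Y))/n$ yields $\mathbb{E}\big[\lVert\mathbf{E}^1 - \mathbf{\tilde{E}}^1\rVert^2\big] = O(1/n)$, so $\mathrm{Var}(D_n) = O(1/n)$. (Equivalently, as the authors hint, one may integrate the sub-Gaussian tail bound on $\lVert\mathbf{E}^1 - \mathbf{\tilde{E}}^1\rVert$ derived in that proof via $\mathbb{E}[Z^2] = \int_0^\infty 2t\,\mathbb{P}(Z\ge t)\,dt$.) Combining, $\mathrm{Var}(\hat{\alpha}^{\rm LOO}) = O(1/n)$.

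The genuinely delicate point is the handling of $D_n$: the summands $F(\mathbf{E}^j)$ are correlated because their denominators all involve the common empirical average $\bar{S}_n$, so one cannot read off a variance rate directly from an i.i.d.\ argument. The resolution is that the deviation $F(\mathbf{E}^j) - F(\mathbf{\tilde{E}}^j)$ is small in $L^2$, of order $n^{-1/2}$, which is precisely enough to make its contribution to the variance of the average $O(1/n)$; this smallness is what the deterministic per-coordinate estimate, together with the $O(1/n)$ fluctuation of $\bar{S}_n$ around $\mu$, delivers.
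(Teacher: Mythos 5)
Your proof is correct and reaches the same $O(1/n)$ rate, using the same decomposition $\hat{\alpha}^{\rm LOO} = A_n + D_n$ (the paper calls these $\hat{\alpha}^{\rm i.i.d.}$ and $\Delta$) and the same bound $\mathrm{Var}(A_n) \le 1/(4n)$, but your handling of the deviation term $D_n$ is genuinely different and, arguably, cleaner. The paper expands $\mathrm{Var}(\hat{\alpha}^{\rm LOO})$ into $\mathrm{Var}(\hat{\alpha}^{\rm i.i.d.}) + \mathrm{Var}(\Delta) + 2\,\mathrm{Cov}(\hat{\alpha}^{\rm i.i.d.}, \Delta)$, bounds $\mathrm{Var}(\Delta)$ by integrating the sub-Gaussian tail bound on $|\Delta| = T_1$ obtained from Hoeffding (via the identity $\mathbb{E}[\Delta^2] = \int_0^\infty 2t\,\mathbb{P}(|\Delta|\ge t)\,dt$ and an explicit Gaussian-type integral), and then controls the cross term by Cauchy--Schwarz. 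You instead avoid the covariance term entirely via $\mathrm{Var}(X+Y) \le 2\mathrm{Var}(X) + 2\mathrm{Var}(Y)$, and you bound $\mathrm{Var}(D_n) \le \mathbb{E}[D_n^2]$ by a direct $L^2$ computation: Jensen gives $D_n^2 \le \tfrac1n\sum_j \lVert \mathbf{E}^j - \mathbf{\tilde{E}}^j\rVert^2$, identical distribution reduces this to $\mathbb{E}[\lVert\mathbf{E}^1 - \mathbf{\tilde{E}}^1\rVert^2]$, and the deterministic per-coordinate estimate together with $\mathbb{E}[(\mu - \bar{S}_n)^2] = \mathrm{Var}(S(X,Y))/n$ gives the $O(1/n)$ rate without ever invoking a concentration inequality. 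What your route buys is simplicity and no explicit constants to track; what the paper's route buys is an explicit nonasymptotic bound that it already needs anyway (the tail bound on $T_1$) and that it reuses from Theorem~\ref{thm:consistent}. Both are valid. One small point worth flagging explicitly in your write-up: the prefactor $S_{\max}/\big(\mu(S_{\min} - S_{\max}/n)\big)$ depends on $n$, so you should note (as you do parenthetically) that it is bounded uniformly over $n > S_{\max}/S_{\min}$ before absorbing it into $c_1, c_2$.
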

\begin{proof}
    We use the same notations as in the proof of Theorem \ref{thm:consistent}. We begin by decomposing the leave-one-out estimator $\hat{\alpha}^{\rm LOO} = \frac{1}{n}\sum_{j=1}^n F(\mathbf{E}^j)$ into a sum of i.i.d.\ components and a deviation term:
    \[
    \hat{\alpha}^{\rm LOO} = \underbrace{\frac{1}{n}\sum_{j=1}^n F\left(\mathbf{\tilde{E}}^j\right)}_{=: \ \hat{\alpha}^{\rm i.i.d.}} + \underbrace{\frac{1}{n}\sum_{j=1}^n \left(F(\mathbf{E}^j) - F\left(\mathbf{\tilde{E}}^j\right) \right)}_{=: \ \Delta},
    \]
    where the $F\left(\mathbf{\tilde{E}}^j\right)$ are i.i.d.\ copies. Since $F$ takes values in $[0,1]$, the variance of each i.i.d.\ term is bounded by $\mathrm{Var}\left(F\left(\mathbf{\tilde{E}}^j\right)\right) \le \frac{1}{4}$, yielding:
    \[
    \mathrm{Var}\left(\hat{\alpha}^{\rm i.i.d.}\right) \le \frac{1}{4n} = O\left(\frac{1}{n}\right).
    \]
    The variance decomposes as:
    \[
    \mathrm{Var}\left(\hat{\alpha}^{\rm LOO}\right) = \mathrm{Var}\left(\hat{\alpha}^{\rm i.i.d.}\right) + \mathrm{Var}\left(\Delta\right) + 2\mathrm{Cov}\left(\hat{\alpha}^{\rm i.i.d.},\Delta\right).
    \]
    We have already bounded the first term, $\mathrm{Var}\left(\hat{\alpha}^{\rm i.i.d.}\right)$. It remains to control the variance and covariance terms involving $\Delta$.
    
    For $\mathrm{Var}\left(\Delta\right)$, note that $\left\lvert \Delta \right\rvert = T_1$, and we established in the proof of Theorem \ref{thm:consistent} that:
    \[
    \left\lvert \Delta \right\rvert = T_1 \le S_{\max} \frac{\sqrt{\frac{\log(2/\delta)}{2n}} + \frac{2}{n}}{\mu \left( S_{\min}/S_{\max}- \frac{1}{n} \right)}  \quad \text{with probability $\ge 1 - \delta$}.
    \]
    Thus, we obtain a tail bound:
    \[
    \mathbb{P}(\left\lvert \Delta \right\rvert \ge t) \le 2\exp\left(-2n \left( \frac{t\mu}{S_{\max}}\left(\frac{S_{\min}}{S_{\max}}-\frac{1}{n}\right) -\frac{2}{n}\right)^2\right).
    \]
    Then:
    \begin{align*}
        \mathrm{Var}\left(\Delta\right) &= \mathbb{E}\left[\Delta^2\right] - \mathbb{E}\left[\Delta\right]^2\\
        &\le \mathbb{E}\left[\Delta^2\right]\\
        &= \int_0^\infty \mathbb{P}\left(\left\lvert \Delta \right\rvert \ge \sqrt{u} \right) \, du\\
        &= 2\int_0^\infty t\mathbb{P}\left(\left\lvert \Delta \right\rvert \ge t \right) \, dt \quad \text{$\left(\text{substituting }u = t^2, \ du = 2dt\right)$}\\
        &\le 4\int_0^\infty t\exp\left(-2n \left( \frac{t\mu}{S_{\max}}\left(\frac{S_{\min}}{S_{\max}}-\frac{1}{n}\right) -\frac{2}{n}\right)^2\right) \, dt.
    \end{align*}
Substituting:
\[
v = \frac{t\mu}{S_{\max}}\left(\frac{S_{\min}}{S_{\max}}-\frac{1}{n}\right) -\frac{2}{n}, \ dv = \frac{\mu}{S_{\max}}\left(\frac{S_{\min}}{S_{\max}}-\frac{1}{n}\right) dt,
\]
we obtain:
\[
\le \left(\frac{2}{\frac{\mu}{S_{\max}}(\frac{S_{\min}}{S_{\max}}-\frac{1}{n})}\right)^2 \int_{-\frac{2}{n}}^\infty \left(v + \frac{2}{n}\right) \exp(-2nv^2) \, dv.
\]
Now, we split the integral:
\begin{align*}
\int_{-\frac{2}{n}}^\infty \left(v + \frac{2}{n}\right) \exp(-2nv^2) \, dv &= \int_{-\frac{2}{n}}^\infty v \exp(-2nv^2) \, dv + \int_{-\frac{2}{n}}^\infty \frac{2}{n}\exp(-2nv^2) \, dv\\
&\le \int_{-\frac{2}{n}}^\infty v \exp(-2nv^2) \, dv + \int_{-\infty}^\infty \frac{2}{n}\exp(-2nv^2) \, dv\\
&= \left[-\frac{1}{4n} \exp(-2nv^2) \right]_{-\frac{2}{n}}^\infty + \frac{2}{n}\sqrt{\frac{\pi}{2n}}\\
&= \frac{1}{4n} e^{-8/n}+ \frac{2}{n}\sqrt{\frac{\pi}{2n}}.
\end{align*}
Therefore:
\[
\mathrm{Var}\left(\Delta\right) \le \left(\frac{2}{\frac{\mu}{S_{\max}}(\frac{S_{\min}}{S_{\max}}-\frac{1}{n})}\right)^2 \left(\frac{1}{4n} e^{-8/n}+ \frac{2}{n}\sqrt{\frac{\pi}{2n}} \right)=O\left(\frac{1}{n}\right).
\]
Finally, by the Cauchy-Schwarz inequality:
\[
\left\lvert\mathrm{Cov}\left(\hat{\alpha}^{\rm i.i.d.},\Delta\right)\right\rvert \le \sqrt{\mathrm{Var}\left(\hat{\alpha}^{\rm i.i.d.}\right)\mathrm{Var}\left(\Delta\right)} \le \left(\frac{2}{\frac{\mu}{S_{\max}}(\frac{S_{\min}}{S_{\max}}-\frac{1}{n})}\right) \sqrt{\frac{1}{4n} \left( \frac{1}{4n} e^{-8/n}+ \frac{2}{n}\sqrt{\frac{\pi}{2n}} \right)} = O\left(\frac{1}{n}\right).
\]
Putting everything together, we have shown that:
\[
\mathrm{Var}\left(\hat{\alpha}^{\rm LOO}\right) = O\left(\frac{1}{n}\right).
\]
\end{proof}

\subsection{General case}

\begin{theorem}[Theorem \ref{thm:consistent_general}]
Assume that the score function $S$ is bounded and takes values in the interval $[S_{\min},S_{\max}]$, with $0<S_{\min}\le S_{\max}$. Suppose that the number of calibration samples satisfies $n > S_{\max}/S_{\min}$. In addition, assume that the size constraint rule and the score function satisfy the following stability property almost surely:
\begin{enumerate}[label=(P\arabic*), ref=P\arabic*]
\setcounter{enumi}{2}
    \item There exists a constant $L>0$ such that: $$\bigg\lvert \sum_{j=1}^n \left(\tilde{\alpha}_j - \mathbb{E}[\tilde{\alpha}]\right)\bigg\rvert \le L\max_{y \in \mathcal{Y}} \bigg\lvert \sum_{j=1}^n \left(\mathbf{E}^j_y - \mathbb{E}\left[\mathbf{E}^{\rm test}_y\right] \right)\bigg\rvert.$$
\end{enumerate}
Then, if the samples $(X_i,Y_i)$ are i.i.d., the leave-one-out estimator satisfies:
\[
\left\lvert \hat{\alpha}^{\rm LOO} - \mathbb{E}[\tilde{\alpha}] \right\rvert = O_P\left(\frac{1}{\sqrt{n}}\right).
\]
\end{theorem}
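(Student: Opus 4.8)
The plan is to use the stability assumption (\ref{prop:P3}) to reduce the claim to a purely coordinatewise concentration statement, thereby bypassing the Lipschitz analysis of $F$ that drove the constant case. First I would write $\hat{\alpha}^{\rm LOO} - \mathbb{E}[\tilde{\alpha}] = \frac1n\sum_{j=1}^n(\tilde{\alpha}_j - \mathbb{E}[\tilde{\alpha}])$ and apply (\ref{prop:P3}) directly to obtain
\[
\left\lvert\hat{\alpha}^{\rm LOO} - \mathbb{E}[\tilde{\alpha}]\right\rvert \;\le\; \frac{L}{n}\,\max_{y\in\mathcal{Y}}\left\lvert\,\sum_{j=1}^n\left(\mathbf{E}^j_y - \mathbb{E}\!\left[\mathbf{E}^{\rm test}_y\right]\right)\right\rvert .
\]
Because $\mathcal{Y}$ is finite, it suffices to show that for each fixed $y$ the quantity $\frac1n\lvert\sum_{j=1}^n(\mathbf{E}^j_y - \mathbb{E}[\mathbf{E}^{\rm test}_y])\rvert$ is $O_P(1/\sqrt n)$ with a bound uniform in $y$; a union bound over the $\lvert\mathcal{Y}\rvert$ coordinates then inflates the constant only by a factor $\sqrt{\log\lvert\mathcal{Y}\rvert}$ and preserves the rate.

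For a fixed $y$ I would decompose, using $\mathbf{\tilde{E}}^j_y = S(X_j,y)/\mu$ and the identity $\mathbb{E}[\mathbf{\tilde{E}}^j_y] = \mathbb{E}[\mathbf{\tilde{E}}^{\rm test}_y]$ (the $X_j$ are i.i.d.\ copies of $X_{\rm test}$),
\[
\sum_{j=1}^n\left(\mathbf{E}^j_y - \mathbb{E}\!\left[\mathbf{E}^{\rm test}_y\right]\right)
= \sum_{j=1}^n\!\left(\mathbf{E}^j_y - \mathbf{\tilde{E}}^j_y\right)
+ \sum_{j=1}^n\!\left(\mathbf{\tilde{E}}^j_y - \mathbb{E}\!\left[\mathbf{\tilde{E}}^{\rm test}_y\right]\right)
+ n\left(\mathbb{E}\!\left[\mathbf{\tilde{E}}^{\rm test}_y\right] - \mathbb{E}\!\left[\mathbf{E}^{\rm test}_y\right]\right).
\]
The first sum is handled exactly as in the analysis of the term $T_1$ in the proof of Theorem~\ref{thm:consistent}: writing $\mathbf{E}^j_y - \mathbf{\tilde{E}}^j_y$ over a common denominator, bounding the numerator by $\lvert\mu - \frac1n\sum_{i=1}^n S(X_i,Y_i)\rvert + 2S_{\max}/n$ and the denominator below by the deterministic, $j$- and $y$-free quantity $\mu(S_{\min} - S_{\max}/n)>0$, and applying Hoeffding to $\frac1n\sum_{i=1}^n S(X_i,Y_i)$ shows that, on one good event $G_\delta$ of probability at least $1-\delta$ (the \emph{same} event for all $j$ and $y$), $\max_{j,y}\lvert\mathbf{E}^j_y - \mathbf{\tilde{E}}^j_y\rvert \le S_{\max}\bigl(\sqrt{\log(2/\delta)/(2n)}+2/n\bigr)/\bigl(\mu(S_{\min}/S_{\max}-1/n)\bigr)$, hence $\frac1n\lvert\sum_j(\mathbf{E}^j_y - \mathbf{\tilde{E}}^j_y)\rvert = O_P(1/\sqrt n)$ uniformly in $y$. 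The second sum is a centered sum of i.i.d.\ random variables $\mathbf{\tilde{E}}^j_y\in[S_{\min}/\mu,S_{\max}/\mu]$, so Hoeffding gives $\frac1n\lvert\sum_j(\mathbf{\tilde{E}}^j_y - \mathbb{E}[\mathbf{\tilde{E}}^{\rm test}_y])\rvert \le \sqrt{\log(2/\delta)/(2n)}\cdot(S_{\max}-S_{\min})/\mu$ with probability at least $1-\delta$ (this is the coordinatewise analogue of the term $T_2$). The third term, after division by $n$, equals $\lvert\mathbb{E}[\mathbf{\tilde{E}}^{\rm test}_y] - \mathbb{E}[\mathbf{E}^{\rm test}_y]\rvert \le \mathbb{E}\bigl[\lVert\mathbf{\tilde{E}}^{\rm test} - \mathbf{E}^{\rm test}\rVert\bigr]$, which is precisely the term $T_3$ bounded in the proof of Theorem~\ref{thm:consistent} and is of order $O(1/\sqrt n)$, deterministically and uniformly in $y$.

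Combining the three bounds, taking a union bound over $G_\delta$ and the $\lvert\mathcal{Y}\rvert$ Hoeffding events for the second sum, and multiplying by $L$ yields $\lvert\hat{\alpha}^{\rm LOO} - \mathbb{E}[\tilde{\alpha}]\rvert \le L\cdot O_P\bigl(\sqrt{\log(\lvert\mathcal{Y}\rvert/\delta)/n}\bigr) = O_P(1/\sqrt n)$, which is the claim. The main point to watch — everything else being bookkeeping that mirrors Theorem~\ref{thm:consistent} — is that the high-probability bound on the first sum must hold on an event that does \emph{not} depend on $j$, so that averaging over the $n$ leave-one-out terms does not cost an extra union bound of size $n$; this is exactly what the deterministic, $y$-free denominator lower bound $\mu(S_{\min}-S_{\max}/n)$ and the crude numerator bound $2S_{\max}/n$ secure, leaving only the single centered average $\frac1n\sum_{i=1}^n S(X_i,Y_i)-\mu$ to control. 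Note that, unlike the constant case, Properties (\ref{prop:P1}) and (\ref{prop:P2}) play no role here: they were needed only to establish the Lipschitz stability of $F$, whose function is now served entirely by (\ref{prop:P3}).
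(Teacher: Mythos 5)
Your proposal is correct and follows essentially the same route as the paper's proof: apply (\ref{prop:P3}) to reduce to a coordinatewise concentration problem, then split $\sum_j(\mathbf{E}^j_y - \mathbb{E}[\mathbf{E}^{\rm test}_y])$ into the same three pieces (a $\mathbf{E}^j\leftrightarrow\mathbf{\tilde{E}}^j$ discrepancy term bounded via Hoeffding on $\frac1n\sum_i S(X_i,Y_i)$, a centered i.i.d.\ sum handled by Hoeffding with a union bound over $\mathcal{Y}$, and the expectation gap $\mathbb{E}[\mathbf{\tilde{E}}^{\rm test}_y]-\mathbb{E}[\mathbf{E}^{\rm test}_y]$ reused from the $T_3$ bound). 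The paper phrases the decomposition in vector form under the $\ell^\infty$ norm rather than coordinatewise, but the argument, the intermediate quantities, and the resulting rate are identical; your explicit remark that the good event for the first term must be $j$- and $y$-free is a correct and worthwhile observation that the paper uses implicitly.
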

\begin{proof}
    We have:
    \begin{align*}
        \lvert \hat{\alpha}^{\rm LOO} - \mathbb{E}[\tilde{\alpha}] \rvert &= \frac{1}{n} \left\lvert \sum_{j=1}^n \left( \tilde{\alpha}_j - \mathbb{E}[\tilde{\alpha}] \right)\right\rvert \\
        &\le \frac{L}{n}\left\lVert \sum_{j=1}^n \left(\mathbf{E}^j - \mathbb{E}\left[\mathbf{E}^{\rm test}\right] \right)\right\rVert \\
        &= \frac{L}{n}\left\lVert \sum_{j=1}^n \left(\mathbf{E}^j - \mathbf{\tilde{E}}^j \right)+ \left(\mathbf{\tilde{E}}^j-\mathbb{E}\left[\mathbf{\tilde{E}}^{\rm test}\right]\right) + \left(\mathbb{E}\left[\mathbf{\tilde{E}}^{\rm test}\right]-\mathbb{E}\left[\mathbf{E}^{\rm test}\right]\right)\right\rVert \\
        &\le \frac{L}{n} \sum_{j=1}^n \left\lVert \mathbf{E}^j - \mathbf{\tilde{E}}^j \right\lVert + \frac{L}{n} \left\lVert \sum_{j=1}^n \left(\mathbf{\tilde{E}}^j - \mathbb{E}\left[\mathbf{\tilde{E}}^{\rm test}\right]\right) \right\lVert + L \, \mathbb{E}\left[\left\lVert\mathbf{\tilde{E}}^{\rm test}-\mathbf{E}^{\rm test}\right\lVert\right],
    \end{align*}
   where the first inequality is precisely the stability property (\ref{prop:P3}), and the second follows from the triangle inequality.

   In the proof of Theorem \ref{thm:consistent}, we have shown that:
   \[
    \frac{1}{n} \sum_{j=1}^n \left\lVert \mathbf{E}^j - \mathbf{\tilde{E}}^j \right\lVert \le S_{\max} \frac{\sqrt{\frac{\log(2/\delta)}{2n}} + \frac{2}{n}}{\mu \left( S_{\min}/S_{\max}- \frac{1}{n} \right)} \quad \text{with probability $\ge 1 - \delta$},
    \]
    and 
\[
\mathbb{E}\left[\left\lVert\mathbf{\tilde{E}}^{\rm test}-\mathbf{E}^{\rm test}\right\lVert\right] \le \frac{2S_{\max}^2}{\mu S_{\min}} \frac{n+1}{n} \sqrt{\frac{\pi}{2(n+1)}}.
\]
It remains to bound $\frac{1}{n} \left\lVert \sum_{j=1}^n \left(\mathbf{\tilde{E}}^j - \mathbb{E}\left[\mathbf{\tilde{E}}^{\rm test}\right]\right) \right\lVert$ with high probability. For any $t \ge 0$, union bound combined with Hoeffding’s inequality yields:
\begin{align*}
    \mathbb{P}\left(  \left\lVert\frac{1}{n} \sum_{j=1}^n \left(\mathbf{\tilde{E}}^j - \mathbb{E}\left[\mathbf{\tilde{E}}^{\rm test}\right]\right) \right\lVert \ge t\right) &= \mathbb{P}\left( \bigcup_{y \in \mathcal{Y}}  \left\lvert\frac{1}{n} \sum_{j=1}^n \left(\mathbf{\tilde{E}}^j_y - \mathbb{E}\left[\mathbf{\tilde{E}}^{\rm test}_y\right]\right) \right\lvert \ge t\right)\\
    &\le \sum_{y \in \mathcal{Y}} \underbrace{\mathbb{P}\left( \left\lvert\frac{1}{n} \sum_{j=1}^n \left(\mathbf{\tilde{E}}^j_y - \mathbb{E}\left[\mathbf{\tilde{E}}^{\rm test}_y\right]\right) \right\lvert \ge t\right)}_{\le 2 \exp\left(-\frac{2n\mu^2t^2}{(S_{\max}-S_{\min})^2}\right)}\\
    &\le 2\left\lvert\mathcal{Y}\right\rvert \exp\left(-\frac{2n\mu^2t^2}{(S_{\max}-S_{\min})^2}\right).
\end{align*}
Therefore:
   \[
    \left\lVert\frac{1}{n} \sum_{j=1}^n \left(\mathbf{\tilde{E}}^j - \mathbb{E}\left[\mathbf{\tilde{E}}^{\rm test}\right]\right) \right\lVert \le \frac{S_{\max}-S_{\min}}{\mu} \sqrt{\frac{\log(2\left\lvert\mathcal{Y}\right\rvert/\delta)}{2n}} \quad \text{with probability $\ge 1 - \delta$}.
    \]
 Finally, applying union bound and combining the above inequalities, we obtain:
    \[
    \lvert \hat{\alpha}^{\rm LOO} - \mathbb{E}[\tilde{\alpha}] \rvert \le LS_{\max} \frac{\sqrt{\frac{\log(4/\delta)}{2n}} + \frac{2}{n}}{\mu \left( S_{\min}/S_{\max}- \frac{1}{n} \right)} + L\frac{S_{\max}-S_{\min}}{\mu} \sqrt{\frac{\log(4\left\lvert\mathcal{Y}\right\rvert/\delta)}{2n}} + \frac{2LS_{\max}^2}{\mu S_{\min}} \frac{n+1}{n} \sqrt{\frac{\pi}{2(n+1)}},
    \]
    with probability $\ge 1 - \delta$. We conclude that:
    \[
    \left\lvert \hat{\alpha}^{\rm LOO} - \mathbb{E}[\tilde{\alpha}] \right\rvert = O_P\left(\frac{1}{\sqrt{n}}\right).
    \]
\end{proof}

\section{Deriving Marginal Coverage from the Post-Hoc Guarantee without Approximation}
\label{app:miscoverage}

In this section, we derive a guarantee conceptually aligned with the coverage guarantee in~(\ref{bcp}), but obtained more directly, without relying on a Taylor approximation, and fully estimable from the calibration data. We present the proof only for the constant-size constraint rule, as in Theorem \ref{thm:consistent}, but the same reasoning naturally extends to the more general setting of Theorem \ref{thm:consistent_general}.

\begin{theorem}
Fix $\delta > 0$ and assume that $\mathbb{E}[\tilde{\alpha}^2] > \eta$ for some $\eta > 0$. Let 
\[
\tilde{R}_\delta(n) := S_{\max} \frac{\sqrt{\frac{\log(4/\delta)}{2n}} + \frac{2}{n}}{\mu \left( S_{\min}/S_{\max} - \frac{1}{n} \right)} + \sqrt{\frac{\log(4/\delta)}{2n}} + \frac{2 S_{\max}^2}{\mu S_{\min}} \frac{n+1}{n} \sqrt{\frac{\pi}{2(n+1)}}
\]
be the upper bound obtained in Theorem \ref{thm:consistent}. Let $n_\delta$ be such that for all $n \ge n_\delta$, 
\[
2\tilde{R}_\delta(n) < \mathbb{E}[\tilde{\alpha}^2] - \eta,
\]
which is well defined since $\lim_{n\rightarrow\infty}\tilde{R}_\delta(n)=0$ and $\mathbb{E}[\tilde{\alpha}^2] - \eta >0$. Suppose further that the score function $S$ is bounded and takes values in $[S_{\min}, S_{\max}]$, with~$0~<~S_{\min}~\le~S_{\max}$, and that the number of calibration samples
satisfies $n~>~\min(n_\delta, S_{\max}/S_{\min},\frac{\log(4/\delta)}{2}(\frac{S_{\max}}{\mu})^2)$. In addition, assume that the vectors $\mathbf{E^{\rm test}}$, $\mathbf{E}^j$, $\mathbf{\tilde{E}^{\rm test}}$, and $\mathbf{\tilde{E}}^j$ satisfy the following properties almost surely:
\begin{enumerate}[label=(P\arabic*), ref=P\arabic*]
    \item $1 \le \# \left\{y \in \mathcal{Y} \mid \mathbf{E}_y < 1\right\} \le \mathcal{T} < \left\lvert \mathcal{Y} \right\rvert$;
    \item For all $y \in \mathcal{Y}$, $\mathbf{E}_y \neq 1$.
\end{enumerate}

Then, if the samples $(X_i, Y_i)$ are i.i.d., we have with probability at least $1 - \delta$:
\[
\mathbb{P}(Y_{\rm test} \not\in \hat{C}_n^{\tilde{\alpha}}(X_{\rm test}))\le\left(\!\sqrt{\hat{\alpha}^{\rm LOO}_{\rm sq}}\!+\!\tilde{R}_\delta(n)/\sqrt{\eta}\!\right)\!\left(\frac{n+1}{n}\right)\!\frac{\sqrt{\mathbb{E}[S(X,Y)^2]}}{\!\mathbb{E}[S(X,Y)]\!-\!S_{\max}\sqrt{\frac{\log(4/\delta)}{2n}}\!+\!\frac{1}{n}S_{\min}},
\]
where 
\[
\hat{\alpha}^{\rm LOO}_{\rm sq} := \frac{1}{n} \sum_{j=1}^n \tilde{\alpha}_j^2. 
\]
This means that for sufficiently large calibration sets, the miscoverage probability is effectively controlled by $\sqrt{\hat{\alpha}^{\rm LOO}_{\rm sq}}\frac{\sqrt{\mathbb{E}[S(X,Y)^2]}}{\mathbb{E}[S(X,Y)]}$, with the remainder term decreasing at a rate of $1/\sqrt{n}$ as the calibration size increases.
\end{theorem}
\begin{proof}
We start by rewriting the miscoverage probability:
\begin{align*}
\mathbb{P}(Y_{\rm test} \notin \hat{C}_n^{\tilde{\alpha}}(X_{\rm test})) 
&= \mathbb{P}(E^{\rm test} \ge 1/\tilde{\alpha}) \\
&= \mathbb{E}\left[ \mathbb{1}_{\{\tilde{\alpha} E^{\rm test} \ge 1\}} \right] \\
&\le \mathbb{E}[\tilde{\alpha} E^{\rm test}] \\
&\le \sqrt{\mathbb{E}[\tilde{\alpha}^2]} \, \sqrt{\mathbb{E}[(E^{\rm test})^2]},
\end{align*}
where the last inequality follows from the Cauchy-Schwarz inequality. We now bound the two factors separately.

    Concerning the first factor, we first state a corollary from Theorem \ref{F-lipschitz}, which follows directly from the fact that $x \mapsto x^2$ is 2-Lipschitz on $[0,1]$:
    \begin{corollary}
    \label{corollary}
    Let $\mathcal{K}$ be a compact subset of $\mathbb{R}_+^{\lvert\mathcal{Y}\rvert, [1,\mathcal{T}]}$. Assume that $\mathcal{T} < \left\lvert \mathcal{Y} \right\rvert$. Then the function $F$ is $1$-Lipschitz on $\mathcal{K}$, that is,
 \[
    \lvert F(\mathbf{E})^2 - F(\mathbf{\tilde{E}})^2 \rvert \le 2\lVert \mathbf{E} - \mathbf{\tilde{E}} \rVert,
    \]
    for all $\mathbf{E}, \mathbf{\tilde{E}} \in \mathcal{K}$.
\end{corollary}

Using Corollary \ref{corollary} and mimicking the proof of Theorem \ref{thm:consistent}, we obtain
\begin{equation}
\label{ineq:sq}
\lvert \hat{\alpha}^{\rm LOO}_{\rm sq} - \mathbb{E}[\tilde{\alpha}^2] \rvert = \left\lvert \frac{1}{n}\sum_{j=1}^n F(\mathbf{E}^j)^2 - \mathbb{E}[F(\mathbf{E}^{\rm test})^2] \right\rvert \le 2\tilde{R}_\delta(n).
\end{equation}
By the assumption \(2 \tilde{R}_\delta(n) < \mathbb{E}[\tilde{\alpha}^2] - \eta\), it follows that \(\hat{\alpha}^{\rm LOO}_{\rm sq} > \eta\). Hence, both \(\hat{\alpha}^{\rm LOO}_{\rm sq}\) and \(\mathbb{E}[\tilde{\alpha}^2]\) lie in the interval \((\eta,1]\).  
Since \(x \mapsto \sqrt{x}\) is Lipschitz on \((\eta,1]\) with constant \(1/(2\sqrt{\eta})\), we obtain
\[
\left| \sqrt{\hat{\alpha}^{\rm LOO}_{\rm sq}} - \sqrt{\mathbb{E}[\tilde{\alpha}^2]} \right| \le \frac{\tilde{R}_\delta(n)}{\sqrt{\eta}}.
\]
In particular, this gives
\[
\sqrt{\mathbb{E}[\tilde{\alpha}^2]} \le \sqrt{\hat{\alpha}^{\rm LOO}_{\rm sq}} + \frac{\tilde{R}_\delta(n)}{\sqrt{\eta}}.
\]

We now bound the second factor $\sqrt{\mathbb{E}[(E^{\rm test})^2]}$. Applying the same concentration inequality that was used to bound $\frac{1}{n}\sum_{j=1}^n S(X_j,Y_j)$ when establishing (\ref{ineq:sq}), we obtain
\[
(E^{\rm test})^2 \le \frac{S(X_{\rm test},Y_{\rm test})^2}{\bigl(\frac{n}{n+1}\bigr)^2 \left(\mu - S_{\max}\sqrt{\frac{\log(4/\delta)}{2n}} + \frac{1}{n} S_{\min}\right)^2}.
\]

Taking expectations yields
\[
\mathbb{E}[(E^{\rm test})^2] \le \frac{\mathbb{E}[S(X_{\rm test},Y_{\rm test})^2]}{\bigl(\frac{n}{n+1}\bigr)^2 \left(\mu - S_{\max}\sqrt{\frac{\log(4/\delta)}{2n}} + \frac{1}{n} S_{\min}\right)^2}.
\]

Combining the bounds for both factors yields the claimed probabilistic bound.
\end{proof}

The assumption $\mathbb{E}[\tilde{\alpha}^2] > \eta$ ensures that we can apply a Lipschitz bound for the square-root function on $(\eta,1]$, while the conditions on $n$ simply guarantee that all the quantities we manipulate are well-defined and that no divisions by zero occur.

\end{document}